\definecolor{aluminum}{RGB}{153,153,153}
\definecolor{platinum}{RGB}{228,228,228}
\definecolor{bgc}{RGB}{245,245,245}
\definecolor{gallery}{RGB}{240,240,240}
\definecolor{tuatara}{RGB}{67, 67, 67}
\definecolor{flamingo}{RGB}{237, 88, 85}
\definecolor{salmon}{RGB}{242,131,107}
\definecolor{free_speech_aquamarine}{RGB}{0, 156, 114}
\definecolor{Aquamarine3}{RGB}{102, 205, 170} % green
\definecolor{Goldenrod1}{RGB}{255, 193, 37} % yello
\definecolor{IndianRed1}{RGB}{255, 106, 106} % red
\definecolor{SlateBlue1}{RGB}{131, 111, 255}
\definecolor{bb}{HTML}{95e1d3}
\definecolor{gg}{HTML}{c7ffd8}
\definecolor{yy}{HTML}{f0c38e}
\definecolor{blu}{HTML}{5ab4ba}
\definecolor{rr}{HTML}{f38181}
\definecolor{c1}{HTML}{6E85B2}
\definecolor{c2}{HTML}{368B85}
\definecolor{c3}{HTML}{C56824}
\definecolor{c5}{HTML}{916BBF}
\newtheorem{theorem}{Theorem}
\begin{document}

\title{Adaptive Heterogeneous Graph Neural Networks: Bridging Heterophily and Heterogeneity}

\author{Qin Chen}
\orcid{0000-0003-1808-1585}
\affiliation{
 \institution{State Key Laboratory of General Artificial Intelligence, School of Intelligence Science and Technology, Peking University}
 \city{Beijing}
 \country{China}}

\email{chenqink@pku.edu.cn}

\author{Guojie Song}
\orcid{0000-0001-8295-2520}
\authornote{Corresponding author.}
\affiliation{
 \institution{State Key Laboratory of General Artificial Intelligence, School of Intelligence Science and Technology, Peking University}
 \city{Beijing}
 \country{China}}
\email{gjsong@pku.edu.cn}

\renewcommand{\shortauthors}{Qin Chen, Guojie Song}

%%
%% The abstract is a short summary of the work to be presented in the
%% article.
\begin{abstract}
Heterogeneous graphs (HGs) are common in real-world scenarios and often exhibit heterophily. However, most existing studies focus on either heterogeneity or heterophily in isolation, overlooking the prevalence of heterophilic HGs in practical applications. Such ignorance leads to their performance degradation. In this work, we first identify two main challenges in modeling heterophily HGs:  (\romannumeral 1) varying heterophily distributions across hops and meta-paths; (\romannumeral 2) the intricate and often heterophily-driven diversity of semantic information across different meta-paths. Then, we propose the Adaptive Heterogeneous Graph Neural Network (AHGNN) to tackle these challenges. AHGNN employs a heterophily-aware convolution that accounts for heterophily distributions specific to both hops and meta-paths. It then integrates messages from diverse semantic spaces using a coarse-to-fine attention mechanism, which filters out noise and emphasizes informative signals. Experiments on seven real-world graphs and twenty baselines demonstrate the superior performance of AHGNN, particularly in high-heterophily situations. 
% Our code and data are available at \href{https://anonymous.4open.science/r/AHGNNHandin-448D}{AnonymousGitHub} \footnote{https://anonymous.4open.science/r/AHGNNHandin-448D}.
\end{abstract}

%%
%% The code below is generated by the tool at http://dl.acm.org/ccs.cfm.
%% Please copy and paste the code instead of the example below.
%%
\begin{CCSXML}
<ccs2012>
   <concept>
       <concept_id>10002950.10003624.10003633.10010917</concept_id>
       <concept_desc>Mathematics of computing~Graph algorithms</concept_desc>
       <concept_significance>500</concept_significance>
       </concept>
   <concept>
       <concept_id>10010147.10010257.10010293.10010294</concept_id>
       <concept_desc>Computing methodologies~Neural networks</concept_desc>
       <concept_significance>300</concept_significance>
       </concept>
   <concept>
       <concept_id>10010147.10010257.10010258.10010259</concept_id>
       <concept_desc>Computing methodologies~Supervised learning</concept_desc>
       <concept_significance>100</concept_significance>
       </concept>
 </ccs2012>
\end{CCSXML}

\ccsdesc[500]{Mathematics of computing~Graph algorithms}
\ccsdesc[300]{Computing methodologies~Neural networks}
\ccsdesc[100]{Computing methodologies~Supervised learning}

\keywords{graph neural networks, heterophily graphs, heterogeneous graphs}

\maketitle

\section{Introduction} 
\label{sec:intro}
Real-world systems often exhibit complex relationships that can be effectively modeled using heterogeneous graphs (HGs) \cite{sun2012mining}, which capture a diverse range of node and edge types. While conventional Graph Neural Networks (GNNs) struggle with this diversity, specialized Heterogeneous Graph Neural Networks (HGNNs) have been developed to extract structural patterns from these networks \cite{fu2020magnn,hu2020heterogeneous,mao2023hinormer,yang2023simple}. In parallel, many real-world graphs exhibit heterophily \cite{mcpherson2001birds}, where connected nodes are frequently dissimilar, often manifesting as label inconsistency \cite{bandyopadhyay2005link}. This is in stark contrast to the homophily assumption underlying conventional GNNs \cite{mcpherson2001birds}, leading to diminished performance on heterophily graphs \cite{zhu2020beyond}. This challenge has spurred significant research, primarily focused on homogeneous graphs with single node and edge types \cite{chien2020adaptive,li2022finding,xu2023node,chen2025dagprompt}.

\begin{figure}[h]
    \centering
    \begin{subfigure}[b]{\linewidth}
        \centering
        \includegraphics[width=\linewidth]{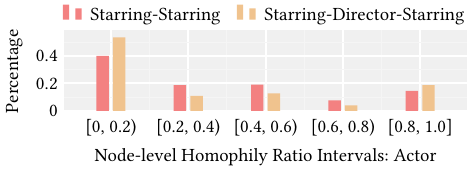}
    \end{subfigure}
    \hfill
    \begin{subfigure}[b]{\linewidth}
        \centering
        \includegraphics[width=\linewidth]{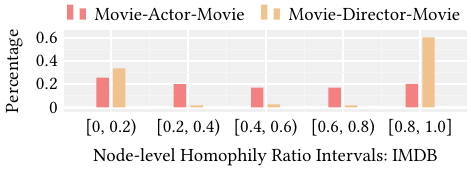}
    \end{subfigure}
    \caption{Distributions of local metapath-induced homophily ratios of Actor and IMDB. For clarity, we categorized them into five bins.}
    \label{fig:local_homophily}
\end{figure}

%The visualization reveals significant heterophily within the Actor and DBLP, where actors from the same movie or linked by the same director do not always exhibit similar characteristics. An intriguing observation is that different meta-path schemes show varying distributions of heterophily. For example, more extreme cases (nodes with homophily ratios approaching either 1 or 0) are observed in the \textit{Starring-Director-Starring} path compared to \textit{Starring-Starring}. This variation may explain the effectiveness of AHGNN on the Actor dataset, where its Adaptive Heterogeneous Convolution addresses different meta-paths with tailored, learnable parameters. Interestingly, we observe that the \textit{Movie-Actor-Movie} relationships in IMDB display a moderate heterophily distribution, whereas the \textit{Movie-Director-Movie} relationships tend to be more polarized, with most nodes clustering near 0 or 1. This further indicates that various meta-paths exhibit distinct heterophily distributions.

However, despite the widespread presence of real-world graphs that exhibit both heterogeneity and heterophily, this intersection remains underexplored. Our analysis reveals that \textbf{significant heterophily exists in heterogeneous graphs}. More importantly, from a meta-path perspective, \textbf{different meta-paths can exhibit distinct heterophily distributions}. To illustrate this, we visualize the node-level homophily ratio (see \autoref{sec:preliminary} for formal definitions) for two widely-used heterogeneous datasets: Actor \cite{guo2023homophily} and IMDB \cite{lv2021we}, as shown in \autoref{fig:local_homophily}. In the Actor dataset, we observe notable heterophily: \textit{Actors} connected through the same \textit{Movie} or \textit{Director} do not consistently share similar attributes. Interestingly, the choice of meta-path significantly affects the heterophily distribution. For instance, in IMDB, the \textit{Movie-Director-Movie} meta-path yields more extreme homophily ratios—closer to 0 or 1—compared to the \textit{Movie-Actor-Movie} path. This demonstrates that different meta-paths involve varying degrees and patterns of heterophily.

Traditional HGNNs generally omit the heterophily issues, leading to their sub-optimal performance in some real-world HGs with inherent heterophily.  Although there have been efforts to extend heterophily-oriented models to account for heterogeneity \cite{guo2023homophily}, these adaptations have generally yielded sub-optimal results due to lack of meta-path specific concerns, and sometimes it hurts the model on non-heterophily graphs. In fact, several widely used HGNNs outperform these rewired models, as evidenced in \autoref{table:main-node-classification}. This observation raises a key question: \textit{How can we effectively model heterogeneous graphs with inherent heterophily?}

This paper highlights two key challenges in mining heterophily HGs:
(\romannumeral 1) \textbf{The variation in heterophily distribution across different hops and meta-paths}. Heterophily levels can vary significantly across different hops and meta-paths (see \autoref{fig:local_homophily} for an example). A one-size-fits-all approach to model heterophily across all hops (as induced by meta-paths) and meta-paths may be sub-optimal. Instead, an adaptive strategy tailored to the specific characteristics of each hop and meta-path is necessary to effectively capture heterophily.
(\romannumeral 2) \textbf{the complex and often heterophily-influenced variation in semantic information across different meta-paths}. Semantic information from meta-paths becomes particularly intricate and multifaceted when these paths traverse heterophily connections and bridges disparate conceptual domains. Consequently, this complexity can render certain meta-paths irrelevant or even noisy for specific nodes or tasks, necessitating models where different nodes can selectively prioritize or filter meta-path information according to their unique context, rather than treating all meta-paths uniformly.

In response to these challenges, we propose the Adaptive Heterogeneous Graph Neural Network (AHGNN), tailored for both homophily and heterophily HGs. AHGNN comprises two components:  (\romannumeral 1) An Adaptive Heterogeneous Convolution module for hop and meta-path specific heterophily-aware convolution. (\romannumeral 2) A Coarse-to-Fine Semantic Fusion module for selective semantic information integration from different meta-paths. We conduct a comprehensive evaluation on twenty baselines and seven real-world datasets. AHGNN consistently achieves state-of-the-art results. Its performance boost is notably marked in strong heterophily scenarios, showing up to a 4.32\% increase in Micro-F1 score. Our few-shot and synthetic experiments further support this finding. Additionally, AHGNN proves to be computationally efficient in our efficiency analysis. We also provide a theoretical analysis of the Adaptive Heterogeneous Convolution.

Generally, our contribution is summarized as follows: 
\begin{itemize}
    \item We identify the two unique challenges in modeling heterogeneous graphs with heterophily. 
    \item We propose a novel model, AHGNN, which handles homophily and heterophily HGs adaptively while maintaining computational efficiency.
    \item We conducted extensive experiments where AHGNN achieves state-of-the-art performance with up to 4.32\% increase in Micro-F1 score. The performance improvement is especially significant on graphs with stronger heterophily. Few-shot and synthetic experiments also corroborate this observation.
\end{itemize}

\section{Preliminary}
\label{sec:preliminary}
% \begin{definition}[Heterogeneous Graph]
% A heterogeneous graph (HG) is defined as $\mathcal G = (\mathcal V, \mathcal E, \phi, \varphi)$. Here, $\mathcal V$ represents the set of nodes, and $\mathcal E$ is the set of edges. Nodes are mapped to their types by $\phi: \mathcal V \to \mathcal T$, where $\mathcal T$ is the set of node types. Each node $v_i$ has a feature vector $\mathbf X_i \in \mathbb R^{F_{\phi(v_i)}}$, where $F_{\phi(v_i)}$ is the type-specific dimension. Edges are similarly mapped to types by $\varphi: \mathcal E \to \mathcal R$, with $\mathcal R$ being the set of edge types.
% \end{definition}

% \begin{definition}[Meta-path]
% Meta-paths define the semantic relationship between nodes in heterogeneous graphs. A meta-path $\mathcal P$ of length $L$ is defined as $\mathcal{T}_1 \stackrel{R_1}{\longrightarrow} \mathcal{T}_2 \stackrel{R_2}{\longrightarrow} \cdots \stackrel{R_{L-1}}{\longrightarrow} \mathcal{T}_L$, and can be abbreviated as $\mathcal{T}_1 \mathcal{T}_2 \cdots \mathcal{T}_L$. 
% \end{definition}

\begin{definition}[Meta-path Induced Sub-graph]
\label{def:mp_subgraph}
Given a meta-path $\mathcal P = \mathcal{T}_1 \mathcal{T}_2 \cdots \mathcal{T}_L$, with adjacency matrices $\mathbf A^{\mathcal{T}_i \mathcal{T}_j}$ representing connections between types $\mathcal{T}_i$ and $\mathcal{T}_j$, the sub-graph $\mathcal G_{\mathcal P}$ induced by $\mathcal P$ is recursively defined as: 
$\mathbf A^{\mathcal P} = \mathbf A^{\mathcal T_1 \cdots \mathcal T_L} = \mathbf A^{\mathcal T_1 \mathcal T_2} \mathbf A^{\mathcal T_2 \cdots \mathcal T_L},$
% \begin{equation}
%     \mathbf A^{\mathcal P} = \mathbf A^{\mathcal T_1 \cdots \mathcal T_L} = \mathbf A^{\mathcal T_1 \mathcal T_2} \mathbf A^{\mathcal T_2 \cdots \mathcal T_L},
% \end{equation}
where $\mathbf A^{\mathcal P}$ is its adjacency matrix. The node set of $\mathcal G_{\mathcal P}$ includes nodes $v_i$ and $v_j$ such that $\phi(v_i) = \mathcal T_1, \phi(v_j)= \mathcal T_L$, and there is a path from $v_i$ to $v_j$ conforming to $\mathcal P$.
\end{definition}

\begin{definition}[Homophily Ratio for Homogeneous Graphs]
\label{def:homo}
Given a homogeneous graph $\mathcal G = (\mathcal V, \mathcal E)$, the \textbf{global} homophily ratio \cite{gong2024towards,zhu2020beyond} of $\mathcal G$ is defined via 
% $h = \frac{| \{(v_i, v_j) : (v_i, v_j) \in \mathcal E \wedge \boldsymbol y_i = \boldsymbol y_j \}|} {|\mathcal E|},$
\begin{equation}
    h = \frac{| \{(v_i, v_j) : (v_i, v_j) \in \mathcal E \wedge \boldsymbol y_i = \boldsymbol y_j \}|} {|\mathcal E|},
\end{equation}
where $\boldsymbol y_i$ denotes the label of node $v_i \in \mathcal V$.
\end{definition}

\begin{definition}[Meta-path Induced Homophily Ratio]
\label{def:mp_homo}
Given a meta-path $\mathcal P = \mathcal{T}_1 \mathcal{T}_2 \cdots \mathcal{T}_L$, and assume that $\mathcal{T}_1 = \mathcal{T}_L$. Therefore, the induced graph $\mathcal G_{\mathcal P}$ is a homogeneous graph. The homophily ratio of HG is defined via meta-path-based connection, which is a natural transition from the homophily ratio in homogeneous graphs \cite{zhu2020beyond}.
The \textbf{global} homophily ratio of $\mathcal G_{\mathcal P}$ is as 
% $h(\mathcal{G}_{\mathcal P})=\frac{\left|\left\{\left(v_i, v_j\right):\left(v_i, v_j\right) \in \mathcal{E}_{\mathcal P} \wedge \boldsymbol{y}_i=\boldsymbol{y}_j\right\}\right|}{|\mathcal{E}_{\mathcal P}|},$
\begin{equation}
h(\mathcal{G}_{\mathcal P})=\frac{\left|\left\{\left(v_i, v_j\right):\left(v_i, v_j\right) \in \mathcal{E}_{\mathcal P} \wedge \boldsymbol{y}_i=\boldsymbol{y}_j\right\}\right|}{|\mathcal{E}_{\mathcal P}|},
\end{equation}
where $\boldsymbol{y}_i$ denotes the label of node $v_i$. We further define the \textbf{local} homophily ratio of $\mathcal G_{\mathcal P}$ for node $v_i$ as 
% $h_i(\mathcal{G}_{\mathcal P})=\frac{\left.\mid\left\{\left(v_i, v_j\right): v_j \in \mathcal N_{\mathcal P, i} \wedge \boldsymbol{y}_i=\boldsymbol{y}_j\right)\right\} \mid}{\left|\mathcal N_{\mathcal P, i}\right|},$
\begin{equation}
h_i(\mathcal{G}_{\mathcal P})=\frac{\left.\mid\left\{\left(v_i, v_j\right): v_j \in \mathcal N_{\mathcal P, i} \wedge \boldsymbol{y}_i=\boldsymbol{y}_j\right)\right\} \mid}{\left|\mathcal N_{\mathcal P, i}\right|},
\end{equation}
where $\mathcal N_{\mathcal P, i}$ denotes the neighborhood of $v_i$ in $\mathcal G_{\mathcal P}$.
$h(\mathcal{G}_{\mathcal P})$ describes the general similarity between node linked by a certain meta-path $\mathcal P$, and $h_i(\mathcal{G}_{\mathcal P})$ is a local version only considering node $v_i$'s neighbors. A $h(\mathcal{G}_{\mathcal P})$ up to 1 indicates a strong similarity between nodes with edges, while a $h(\mathcal{G}_{\mathcal P})$ down to 0 means that there are hardly no similarities at all.

\end{definition} 

\begin{definition}[Homophily Ratio for Heterogeneous Graphs]
\label{def:h_heter}
Given a set of meta-paths $\mathcal M$, including all paths up to length $L$ starting with target node type $\mathcal T_\text{target}$, we define the graph-level homophily ratio as: 
% $h = \operatorname{Mean}\left(h(\mathcal{G}_{\mathcal P})\mid \mathcal P \in \mathcal M \right).$
\begin{equation}
    h = \operatorname{Mean}\left(h(\mathcal{G}_{\mathcal P})\mid \mathcal P \in \mathcal M, \mathcal P \text{ ends with } \mathcal T_\text{target} \right).
\end{equation}
$h$ represents the overall homophily level of a heterogeneous graph considering all its meta-paths. In this paper, the homophily ratio for all graphs are computed with $L=4$.
\end{definition}

\section{Model}
\label{sec:model}

\begin{figure*}
    \centering
    \includegraphics[width=\linewidth]{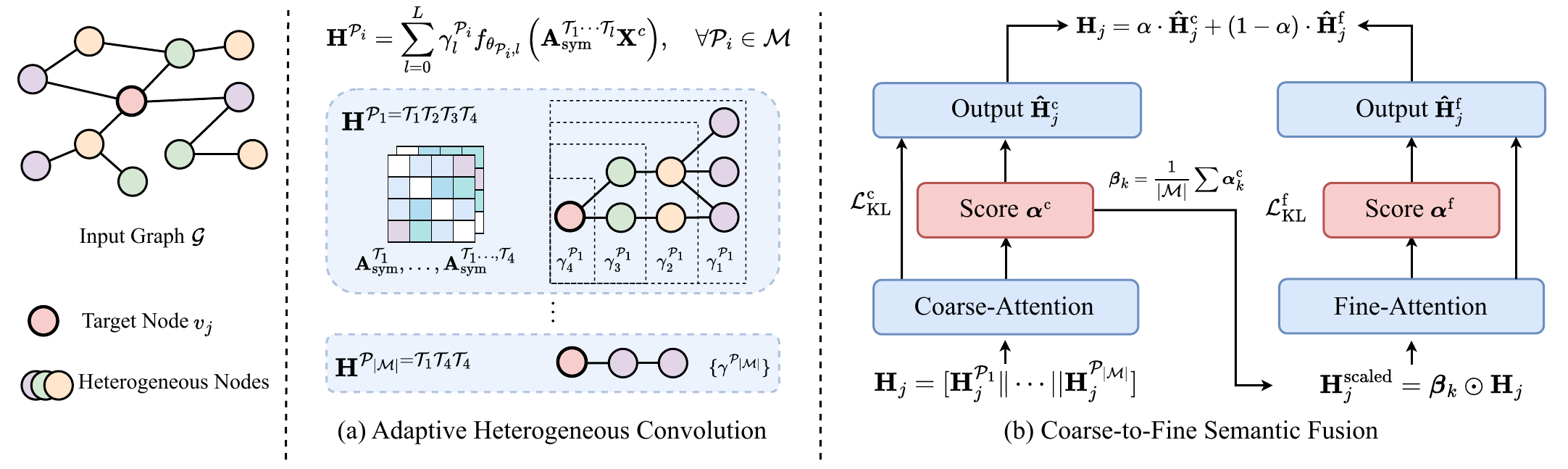}
    \caption{Framework of AHGNN. It first conducts the  Adaptive Heterogeneous Convolution per each meta-path for a path-specific embedding. Then for each node, the corresponding embeddings are processed with a Coarse-to-Fine Semantic Fusion module, to filter out irrelevant and noisy embeddings, and prioritize informative ones. }
    \label{fig:framework}
\end{figure*}

In this section, we present the proposed Adaptive Heterogeneous Graph Neural Network (AHGNN). The framework is structured into two key stages: (\romannumeral 1) Adaptive heterogeneous graph convolution, and (\romannumeral 2) Coarse-to-Fine Semantic fusion. An overview of the AHGNN framework is shown in \autoref{fig:framework}.

\subsection{Adaptive Heterogeneous Convolution}
\label{sec:adaptive_hconv}

In this subsection, we propose the Adaptive Heterogeneous Convolution, which (\romannumeral 1) propagates heterogeneous messages across the meta-paths in an efficient manner; (\romannumeral 2) accommodates heterophily data, aligning with the inherent characteristics of graphs where heterophily distributions can differ across hop-neighborhoods and among meta-paths.

In the field of homogeneous heterophily data mining, various studies \cite{zhu2020beyond,chien2020adaptive} have introduced methods like (\romannumeral 1) hop-separated neighborhood aggregation and (\romannumeral 2) hop-separated transformation. The Adaptive Heterogeneous Convolution draws inspiration from these techniques, adapting the concept from \textbf{homogeneous} to \textbf{heterogeneous} contexts. It shifts from direct edge-linking to meta-path-based edge-linking. Embeddings along a meta-path of length $L$ are viewed as separate messages ranging from $0$-hop (central) to $L-1$ hop. Heterophily distributions can differ across hops \cite{zhu2020beyond}. For instance, a node's immediate neighborhood might predominantly exhibit heterophily, whereas its second-order neighborhood could be homophily-dominant. In heterogeneous scenarios, this variance becomes even more pronounced across different meta-paths. For instance, various meta-paths might exhibit distinct homophily distributions, as illustrated in \autoref{fig:local_homophily}. This variability underscores the need for the adaptive modeling of embeddings within each meta-path (for multi-hop) and across multiple meta-paths (for multi-meta-path).

\subsubsection{Heterogeneous Message Propagation.} As outlined in \autoref{sec:preliminary}, for a given meta-path $\mathcal P_i$, the adjacency matrix of the sub-graph it induces is denoted as $\mathbf A^{\mathcal P_i}$. This matrix, $\mathbf A^{\mathcal P_i} =  \mathbf A^{\mathcal T_1 \cdots \mathcal T_L} $, can be progressively calculated from shorter meta-paths. Node features, represented by \(\mathbf X\), are then propagated along the meta-path. Importantly, the message propagation for any meta-path is performed once and only once in the SGC style \cite{wu2019simplifying}, \textbf{before training}. We enumerate all possible meta-paths starting from target node type $\mathcal T_1$ up to length $L_1$, forming the set $\mathcal M$. For each meta-path $\mathcal P_i$ in $\mathcal M$, we calculate the heterogeneous message propagation:
\begin{equation}
    \label{eq:heterogeneous_mp_1}
    \mathbf S^{\mathcal P_i} =\left\{ \mathbf{\hat A}_\text{sym} ^{\mathcal T_1\cdots \mathcal T_l}\mathbf X \mid l= 1, \dots, L \right\},
\end{equation}
where $\mathbf{\hat A}_\text{sym} = \mathbf D^{-\frac 12}\mathbf A\mathbf D^{-\frac 12}$ is the normalized symmetric version of $\mathbf{A}$, $\mathbf D$ denotes the degree matrix, and $\mathbf S^{\mathcal P_i}$ is the ordered set of embeddings, containing messages from $\mathcal T_1 \cdots \mathcal T_L$ down to $\mathcal T_1$, \textit{in descending order} of meta-path lengths. The pre-calculation enhances the efficiency of the AHGNN. For further details on its efficiency,  please refer to \autoref{sec:complexity} and \autoref{sec:efficiency}.

\subsubsection{Adaptive Heterogeneous Convolution.} The Adaptive Heterogeneous Convolution is formulated based on the heterogeneous messages previously calculated. For a given meta-path $\mathcal P_i \in \mathcal M$ and its associated embedding set  $\mathbf S^{\mathcal P_i}$, the convolution is defined as:
\begin{equation}
    \label{eq:heterogeneous_mp_2}
   \mathbf H^{\mathcal P_i} = \sum_{l=0}^L \gamma_l^{\mathcal P_{i}} f_{\theta_{\mathcal P_i, l}} \left( \mathbf S^{\mathcal P_i}_l \right),
\end{equation} 
where $\mathbf S^{\mathcal P_i}_l$ represents the $l-$th embedding in the ordered set $\mathbf S^{\mathcal P_i}$. The function $f_{\theta_{\mathcal P_i, l}}$ is a linear layer that projects heterogeneous messages of different dimensions into a unified latent space $\mathbb R^d$. For meta-paths with common components, the weights of $f_{\theta_{\mathcal P_i, l}}$ are shared, reducing computational load and preserving correlation among shared meta-path components. For example, in meta-path \textit{author-paper-author} and \textit{author-paper-conference}, the projection for \textit{author} and \textit{author-paper} is shared across the model. Drawing inspiration from \cite{gasteiger2018predict}, the learnable parameters $\{\gamma_l^{\mathcal P_{i}}\}$ are initialized based on the formula $\gamma_l^{\mathcal P_{i}} = \alpha(1-\alpha)^l$ for $l < L-1$, and $\gamma_{L-1}^{\mathcal P_{i}} = (1-\alpha)^{L-1}$, where $\alpha\in (0,1)$ is a hyper-parameter. Initially, as $\alpha \in (0, 1)$, more weight is assigned to longer meta-path components while they remain learnable, allowing the model to adaptively adjust the correlations between different hops. 

These parameters adaptively characterize \textbf{hop-level} relations within each meta-path. AHGNN employs distinct $\{\gamma_l\}$ sets for different meta-paths, accommodating the varied homophily distributions \textbf{across meta-paths}. The Adaptive Heterogeneous Convolution is initialized to approximate polynomial graph filters while allowing flexibility to accommodate other graph patterns, see \autoref{sec:theory} for details.

\subsection{Coarse-to-Fine Semantic Fusion}

In heterogeneous graphs (HGs), multiple meta-paths represent diverse semantic spaces. The semantic information within these spaces can vary significantly, and different nodes may prioritize certain spaces over others. This phenomenon is more profound in heterophily HGs, where certain spaces may be helpful for some nodes while being less influential, or even harmful too others. This raises a critical question in modeling heterophily HGs: \textit{Among diverse meta-paths, how to selectively choose the most informative ones that are representative and beneficial for downstream tasks?}

To address this challenge, we propose a two-level attention mechanism. The first level applies \textbf{coarse-grained attention} across all available meta-paths to identify informative relationships within the heterogeneous graph. Based on the evaluated importance, the meta-paths are re-weighted, and \textbf{fine-grained attention} is then applied. This approach eliminates the need for manual meta-path selection by learning which meta-paths are most relevant to the task. Furthermore, a \textbf{KL divergence loss} is employed to encourage specialization among attention heads at each level, ensuring they capture diverse and complementary graph semantics for enhanced expressiveness.

\subsubsection{Coarse-Grained Attention} 
The first step applies \textbf{coarse-grained attention} to assess the relevance of each meta-path to the task. Formally, for a graph $\mathcal G$  with a meta-path set $\mathcal M$ of size $S=|\mathcal M|$, the Adaptive Heterogeneous Convolution module generates embeddings $\{ \mathbf H^{\mathcal P_i} \in \mathbb R^{N\times d} \mid \mathcal P_i \in \mathcal M\}$. These embeddings are vertically stacked to form an input sequence $\mathbf H \in \mathbb R^{N\times S\times d}$, considering all components in $\mathcal M$. For target node $v_j$, the Coarse-Grained Attention is formulated as \footnote{For clarity, we describe the procedure using single-head attention, although it is implemented with multi-head attention in practice.}:
\begin{equation}
    \label{eq:gated_tfm}
    \begin{aligned} \mathbf Q &= \mathbf H_j\mathbf W^{Q},   \quad \mathbf K= \mathbf H_j \mathbf W^{K}, \quad \mathbf V = \mathbf H_j \mathbf W^{V} \\ 
    \boldsymbol \alpha^\text{c} &= \text{softmax}\left(\frac{\mathbf Q\mathbf K^\top}{\sqrt{d}}\right), \quad
    \mathbf{\hat H}_j^\text{c}= (\boldsymbol{\alpha^\text{c}} \mathbf V)\mathbf W^O, 
    \end{aligned}
\end{equation}
where $\mathbf W^Q, \mathbf W^K, \mathbf W^V, \mathbf W^O$ are projection matrices.

\subsubsection{Soft Meta-path Selection with Influence Factors}
We then utilize \(\boldsymbol \alpha^\text{c} \) to filer out the irrelevant or harmful meta-paths. Instead of directly omitting certain candidates which may potentially discard useful but subtle signals, we introduce a soft meta-path weighting technique through a \textit{influence factor} \(\boldsymbol \beta \in \mathbb{R}^S\), which scales the contribution of each meta-path embedding based on its importance: $ \boldsymbol \beta_k = \frac{1}{|\mathcal M|}\sum\boldsymbol\alpha_k^\text{c}$, where  \(\boldsymbol \alpha_k^\text{c} \) represents the attention weight for the \( k \)-th meta-path based on how informative it is for the current node. $\boldsymbol \beta$ softly modulating the strength of each meta-path by reweighting them as:
\begin{equation}
    \mathbf{H}_j^{\text{scaled}}[k] = \boldsymbol\beta_k \cdot \mathbf{H}_j[k],
\end{equation}
where \(\mathbf{H}_j[k] \in \mathbb{R}^{d}\) denotes the embedding of node \(v_j\) under meta-path \(\mathcal{P}_k\).

\subsubsection{Fine-Grained Attention on Modulated Meta-paths}
We then apply \textbf{fine-grained attention} to the scaled meta-path embeddings to refine node representations, attending more carefully to interactions between selected meta-paths and task-relevant local semantics. This is formalized as:

\begin{equation}
    \begin{aligned}
    \mathbf{Q}^{\text{f}} &= \mathbf{H}_j^{\text{scaled}} \mathbf{W}^{Q}, \; 
    \mathbf{K}_k^{\text{f}} = \mathbf{H}_j^{\text{scaled}} \mathbf{W}^{K}, \; 
    \mathbf{V}^{\text{f}} = \mathbf{H}_j^{\text{scaled}} \mathbf{W}^{V}, \\
    \boldsymbol \alpha^{\text{f}} &= \text{softmax} \left( \frac{\mathbf{Q}^{\text{f}} {\mathbf{K}^{\text{f}}}^\top}{\sqrt{d}} \right), \quad
    \mathbf{\hat H}_j^\text{f}= (\boldsymbol{\alpha^\text{f}} \mathbf V)\mathbf W^O,
    \end{aligned}
\end{equation}
where $\mathbf W^Q, \mathbf W^K, \mathbf W^V, \mathbf W^O$ are projection matrices.

\subsubsection{Final Fusion and KL Divergence Regularization}

The final node representation is obtained by combining coarse and fine-level outputs via a learnable weighted sum:

\begin{equation}
\label{eq:final}
    \mathbf{H}_j = \alpha \cdot \mathbf{\hat{H}}_j^\text{c} + (1 - \alpha) \cdot \mathbf{\hat{H}}_j^\text{f},
\end{equation}
where \(\alpha \in [0, 1]\) is a learnable scalar parameter initialized to 0.5.
% To encourage diversity among attention heads and avoid redundancy, we adopt a KL divergence regularization at both levels. For the coarse-grained and fine-grained attention heads, the regularization terms are defined as:

% \begin{equation}
% \begin{aligned}
%         \mathcal{L}_{\text{KL}}^\text{c} = \sum_{h_1 \neq h_2} \text{KL}(\boldsymbol\alpha_{h_1}^\text{c} \parallel \boldsymbol\alpha_{h_2}^\text{c}), \;
%     \mathcal{L}_{\text{KL}}^\text{f} = \sum_{h_1 \neq h_2} \text{KL}(\boldsymbol\alpha_{h_1}^\text{f} \parallel \boldsymbol\alpha_{h_2}^\text{f}).
% \end{aligned}
% \end{equation}

% The total loss is given by:

% \begin{equation}
%     \mathcal{L} = \mathcal{L}_{\text{task}} + \lambda_1 \mathcal{L}_{\text{KL}}^\text{c} + \lambda_2 \mathcal{L}_{\text{KL}}^\text{f},
% \end{equation}
% where \(\lambda_1\) and \(\lambda_2\) balance the regularization strength at each level, and $\mathcal L_\text{task}$ is the loss objective specific to downstream tasks, for example, the Cross Entropy loss for node classification task.
This framework enables the model to automatically identify, weigh, and refine semantically meaningful meta-paths, making it powerful for challenging heterophily settings in heterogeneous graphs.

\subsection{Implementation Details}
This subsection outlines additional implementation details of AHGNN to enhance its performance.

\textbf{Label Propagation.} The label propagation process closely mirrors the Heterogeneous Message Propagation described in \autoref{sec:adaptive_hconv}. It is pre-computed before the training for all meta-paths starting from $\mathcal T_1$ up to a maximum length \(L_2\) in the graph:
\begin{equation}
\begin{aligned}
    \label{eq:heterogeneous_mp_label}
    \hat{\mathbf S}^{\mathcal P_i} &=\left\{ \mathbf{\hat A}_\text{sym} ^{\mathcal T_1\cdots \mathcal T_l}\mathbf Y^{\text{train}, c} \mid l= 1, \dots, L \right\} \\ 
    \hat{\mathbf H}^{\mathcal P_i} &= \sum_{l=0}^L \gamma_l f_{\theta_{\mathcal P_i, l}}' \left( \hat{\mathbf S}^{\mathcal P_i}_l \right),
\end{aligned}
\end{equation}
where $f_{\theta_{\mathcal P_i, l}}$ is a linear layer. The key difference lies in the propagation of one-hot labels $\mathbf Y^{\text{train}, c}$ from the training set rather than node features. These generated embeddings are then appended to the Gated Transformer's input sequence. For instance, if there are $S$ meta-paths with messages and $S'$ meta-paths with label-messages, then $\mathbf H_j$ in \autoref{eq:gated_tfm} will of shape $\mathbb R^{(S+S')\times d}$.

\textbf{$L_2$-Normalization.} In line with the approach in \cite{lv2021we}, we employ an $L_2$-Normalization technique for the final embedding $\mathbf{\tilde H}_j$ in \autoref{eq:final}: $    \mathbf{\tilde H}_j = \frac{\mathbf{\tilde H}_j}{\| \mathbf{\tilde H}_j \|_2}$.

\subsection{Complexity Analysis}
\label{sec:complexity}
The computational complexity of the proposed AHGNN model is near-linear. For \textbf{Adaptive Heterogeneous Convolution}, the Heterogeneous Message Propagation calculated before the training has a complexity of \(\mathcal{O}(LEF)\) per meta-path, where \(L\) is the meta-path length, \(E\) is the average number of edges in subgraphs, and \(F\) is the feature dimension. During convolution, the complexity is \(\mathcal{O}(LNdF)\) per meta-path, with \(N\) as the number of nodes and \(d\) the hidden dimension. Label propagation adds \(\mathcal{O}(LNdC)\), where \(C\) is the number of classes. For \textbf{Coarse-to-fine Semantic Fusion}, each node processes \(S\) meta-paths, yielding input tokens \(\mathbf{H}_i \in \mathbb{R}^{S \times d}\). Fusion has a per-node complexity of \(\mathcal{O}(S^2d)\) and a total graph complexity of \(\mathcal{O}(NS^2d)\). Generally, the complexity of AHGNN is approximately linear to $N$.

\section{Theoretical Analysis}
\label{sec:theory}
Here, we present a theoretical analysis of Adaptive Heterogeneous Convolution from the perspective of graph filtering. We establish the connection between the proposed Adaptive Heterogeneous Convolution and polynomial graph filters. To simplify our approach, we initially focus on homogeneous graphs.

\textbf{Notations.} Consider a homogeneous, connected graph $\mathcal{G}$ with $N$ nodes, represented by its adjacency matrix $\mathbf{A}$ and degree matrix $\mathbf{D}$. The normalized adjacency matrix is given by $\mathbf{\hat{A}}_\text{sym} = \mathbf{D}^{-\frac{1}{2}} \mathbf{A} \mathbf{D}^{-\frac{1}{2}}$, and its eigenvalue decomposition can be expressed as $\mathbf{\hat{A}}_\text{sym} = \mathbf{U} \mathbf{\Lambda} \mathbf{U}^\top$. Let $\lambda_0 \geq \lambda_1 \geq \cdots \geq \lambda_{N-1}$ denote the eigenvalues of $\mathbf{\hat{A}}_\text{sym}$. Referring to $\mathbf{\hat{A}}_\text{sym}^{\mathcal{T}_1 \cdots \mathcal{T}l}$ in \autoref{eq:heterogeneous_mp_1}, which can be rewritten as $\mathbf{\hat{A}}_\text{sym}^l$ in homogeneous settings, the corresponding polynomial graph filters for $l = 0, 1, \cdots, L$ are given by $ \sum_{l=0}^{L}\gamma_l \mathbf{\hat A}_\text{sym} = \mathbf U \beta_{\gamma, L}(\mathbf \Lambda) \mathbf U ^\top,$
% \begin{equation}
%         \sum_{l=0}^{L}\gamma_l \mathbf{\hat A}_\text{sym} = \mathbf U \beta_{\gamma, L}(\mathbf \Lambda) \mathbf U ^\top,
% \end{equation}
where $\beta_{\gamma, L}(\mathbf{\Lambda})$ is a diagonal matrix with $\beta_{\gamma, L}(\mathbf{\Lambda})_{i,i} = \beta_{\gamma, L}(\lambda_i) = \sum_{l=0}^{L} \gamma_l \lambda_i^l.$
% \begin{equation}
%     \beta_{\gamma, L}(\mathbf{\Lambda})_{i,i} = \beta_{\gamma, L}(\lambda_i) = \sum_{l=0}^{L} \gamma_l \lambda_i^l.
% \end{equation}

We begin the analysis by establishing the following lemma:

\begin{lemma}
\label{lemma:1}
For a connected graph, the largest eigenvalue \(\lambda_0\) of the normalized adjacency matrix is equal to 1, and all other eigenvalues \(\lambda_i\) satisfy \(\lambda_i < 1\) for \(i = 1, 2, \dots, N-1\).
\end{lemma}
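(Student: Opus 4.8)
The plan is to recast the claim about the normalized adjacency matrix $\mathbf{\hat A}_\text{sym}$ as an equivalent statement about the symmetric normalized Laplacian $\mathbf L_\text{sym} = \mathbf I - \mathbf{\hat A}_\text{sym}$, which is easier to control because it admits a clean positive-semidefinite quadratic form. Since $\lambda$ is an eigenvalue of $\mathbf{\hat A}_\text{sym}$ if and only if $1-\lambda$ is an eigenvalue of $\mathbf L_\text{sym}$ (they share eigenvectors), showing that $\lambda_0 = 1$ is the top eigenvalue and that all remaining $\lambda_i < 1$ is equivalent to showing that $0$ is the smallest eigenvalue of $\mathbf L_\text{sym}$ and that its eigenspace is one-dimensional. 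This reduction replaces the task of bounding eigenvalues of $\mathbf{\hat A}_\text{sym}$ from above with the more familiar task of analyzing the null space of a PSD operator.

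First I would exhibit $1$ as an eigenvalue by a direct computation: taking $\mathbf 1$ to be the all-ones vector and $\mathbf d = \mathbf A\mathbf 1$ the degree vector, one checks that $\mathbf{\hat A}_\text{sym}\bigl(\mathbf D^{1/2}\mathbf 1\bigr) = \mathbf D^{-1/2}\mathbf A\mathbf 1 = \mathbf D^{-1/2}\mathbf d = \mathbf D^{1/2}\mathbf 1$, so $\mathbf D^{1/2}\mathbf 1$ is an eigenvector with eigenvalue $1$ (equivalently $\mathbf L_\text{sym}\bigl(\mathbf D^{1/2}\mathbf 1\bigr)=\mathbf 0$). Next I would establish the upper bound $\lambda_i \le 1$ for all $i$ by proving $\mathbf L_\text{sym}\succeq 0$. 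The key device is the change of variables $\mathbf x = \mathbf D^{1/2}\mathbf z$ (a bijection, since a connected graph on $N\ge 2$ nodes has no isolated vertices and hence invertible $\mathbf D$), under which $\mathbf x^\top \mathbf L_\text{sym}\mathbf x = \mathbf z^\top(\mathbf D - \mathbf A)\mathbf z = \sum_{(v_i,v_j)\in\mathcal E}(z_i - z_j)^2 \ge 0$. This single identity delivers both the upper bound and the machinery for the final step.

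The remaining and most delicate step is to upgrade $\lambda_i \le 1$ to the strict $\lambda_i < 1$ for $i \ge 1$, i.e. to show eigenvalue $1$ is simple, and this is exactly where the connectivity hypothesis is indispensable. From the quadratic form above, $\mathbf x^\top \mathbf L_\text{sym}\mathbf x = 0$ forces $z_i = z_j$ across every edge of $\mathcal G$; in a connected graph this equality propagates along paths to make $\mathbf z$ constant, so $\mathbf x$ is a scalar multiple of $\mathbf D^{1/2}\mathbf 1$ and the null space of $\mathbf L_\text{sym}$ is one-dimensional. Because $\mathbf{\hat A}_\text{sym}$ is symmetric, its eigenvectors can be chosen orthonormal, so every eigenvector orthogonal to $\mathbf D^{1/2}\mathbf 1$ carries an eigenvalue strictly below $1$. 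I expect this connectivity-to-constant-vector implication to be the crux of the argument; by contrast, the existence of the eigenvalue $1$ and the upper bound are essentially mechanical once the Laplacian reformulation and the edge-difference identity are in place.
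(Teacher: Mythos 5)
Your proof is correct, and it takes a genuinely different route from the paper's. The paper argues via Perron--Frobenius theory: it asserts that the degree normalization ``balances row and column sums'' so that the largest eigenvalue of $\mathbf{\hat A}_\text{sym}$ is $1$, and then invokes the Perron--Frobenius theorem for the non-negative matrix of a connected graph (i.e.\ an irreducible matrix) to conclude that this top eigenvalue is simple, hence $\lambda_i < 1$ for $i \geq 1$. You instead pass to the normalized Laplacian $\mathbf L_\text{sym} = \mathbf I - \mathbf{\hat A}_\text{sym}$: you exhibit the top eigenvector $\mathbf D^{1/2}\mathbf 1$ by direct computation, prove that $\mathbf L_\text{sym}$ is positive semidefinite through the edge-difference identity $\mathbf x^\top \mathbf L_\text{sym}\mathbf x = \sum_{(v_i,v_j)\in\mathcal E}(z_i - z_j)^2$ with $\mathbf x = \mathbf D^{1/2}\mathbf z$, and show the null space is one-dimensional because vanishing of the quadratic form forces $\mathbf z$ to be constant on a connected graph. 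The trade-off is this: the paper's argument is shorter if Perron--Frobenius is taken as a black box, but as written it is loose --- ``row and column sums are balanced'' is not itself a proof that the spectral radius equals $1$ (the precise route is that $\mathbf{\hat A}_\text{sym}$ is similar to the row-stochastic random-walk matrix $\mathbf D^{-1}\mathbf A$), and the simplicity claim is delegated entirely to the cited theorem. Your argument is self-contained, uses only elementary linear algebra, makes the exact role of connectivity visible (equality of $z_i$ propagating along paths), and additionally identifies the eigenvector attaining $\lambda_0 = 1$, which the paper never does; you also correctly flagged the need for $\mathbf D$ to be invertible, which connectivity with $N \geq 2$ guarantees. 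Both arguments establish precisely the claimed one-sided bound $\lambda_i < 1$; neither gives the lower bound $\lambda_i \geq -1$, which is fine for the lemma as stated, though you should note that the paper's Theorem 1 later uses $|\lambda_i| \leq 1$, a slightly stronger fact that your quadratic-form machinery would also deliver by considering $\mathbf I + \mathbf{\hat A}_\text{sym}$.
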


\begin{proof}
The normalized adjacency matrix \(\mathbf{\hat{A}}_\text{sym}\) is symmetric, constructed by normalizing the adjacency matrix with the degrees of the nodes. This normalization ensures that the row and column sums are balanced, leading to the largest eigenvalue being 1. Since \(\lambda_0 \geq \lambda_i\) for \(i = 1, 2, \dots, N-1\), if there were another eigenvalue equal to 1, say \(\lambda_1 = 1\), it would imply the existence of a corresponding eigenvector. However, this would contradict the Perron-Frobenius theorem \cite{pillai2005perron}, which asserts that the largest eigenvalue of a non-negative matrix associated with a connected graph is unique. Thus, all other eigenvalues \(\lambda_i\) satisfy \(\lambda_i < 1\) for \(i = 1, 2, \dots, N-1\). 
\end{proof}

Then, the following holds:

\begin{theorem}
\label{theorem:1}
By setting $\gamma_l > 0$ (via setting $\alpha > 0$) for $l = 0, 1, \cdots, L$ and $\sum_{l=0}^L \gamma_l = 1$, if there exists $l_i \geq 0, i \ne 0$, then the Adaptive Heterogeneous Convolution $\beta_{\gamma, L}$ is initialized to a low-pass graph filter with $\left|\frac{\beta_{\gamma, L}(\lambda_i)}{\beta_{\gamma, L}(\lambda_0)} \right| < 1$ strictly for any $i = 1, 2, \cdots, N-1$.
\end{theorem}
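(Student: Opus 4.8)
The plan is to reduce the stated ratio to a bound on the numerator alone, and then control the numerator using only the constraints $\gamma_l > 0$ and $\sum_{l=0}^L \gamma_l = 1$ together with the spectral facts from \autoref{lemma:1}. First I would evaluate the denominator: by \autoref{lemma:1} the top eigenvalue is $\lambda_0 = 1$, so $\beta_{\gamma, L}(\lambda_0) = \sum_{l=0}^L \gamma_l \lambda_0^l = \sum_{l=0}^L \gamma_l = 1$ by the normalization hypothesis. Hence the quantity to bound collapses to $\left| \beta_{\gamma, L}(\lambda_i) \right| = \left| \sum_{l=0}^L \gamma_l \lambda_i^l \right|$, and it suffices to prove this is strictly below $1$ for every $i = 1, \dots, N-1$.

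Next I would exploit that the eigenvalues of $\mathbf{\hat A}_\text{sym}$ all lie in $[-1, 1]$ (equivalently, the normalized Laplacian $\mathbf I - \mathbf{\hat A}_\text{sym}$ is positive semidefinite with spectrum in $[0,2]$), which combined with \autoref{lemma:1} gives $\lambda_i \in [-1, 1)$ for each $i \ne 0$. Applying the triangle inequality and the positivity of the coefficients yields $\left| \beta_{\gamma, L}(\lambda_i) \right| \le \sum_{l=0}^L \gamma_l |\lambda_i|^l = \gamma_0 + \sum_{l=1}^L \gamma_l |\lambda_i|^l$. Whenever $|\lambda_i| < 1$, each term with $l \ge 1$ satisfies $|\lambda_i|^l < 1$ and carries a strictly positive weight $\gamma_l$, so the sum is strictly less than $\gamma_0 + \sum_{l=1}^L \gamma_l = 1$, as desired.

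The main obstacle is the boundary case $\lambda_i = -1$, which can occur precisely when $\mathcal G$ is bipartite; here $|\lambda_i| = 1$ and the triangle-inequality bound degrades to $\le 1$ rather than the required $< 1$. To recover strictness I would instead split the polynomial by parity: writing $P = \sum_{l \text{ even}} \gamma_l$ and $Q = \sum_{l \text{ odd}} \gamma_l$, one has $\beta_{\gamma, L}(-1) = P - Q$ with $P + Q = 1$. Because all coefficients are strictly positive and $L \ge 1$, both $P \ge \gamma_0 > 0$ and $Q \ge \gamma_1 > 0$, so $\left| P - Q \right| < P + Q = 1$. This closes the remaining case and establishes the strict low-pass property for all $i \ne 0$.

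Finally I would verify that the prescribed initialization satisfies these hypotheses, so that the filter is genuinely \emph{initialized} as a low-pass filter: for $\alpha \in (0,1)$ the weights $\gamma_l = \alpha(1-\alpha)^l$ and $\gamma_{L-1} = (1-\alpha)^{L-1}$ are all strictly positive, and a short geometric-series computation confirms they sum to $1$, matching the normalization assumed above. The resulting interpretation is that $\beta_{\gamma,L}$ preserves the component aligned with $\lambda_0 = 1$ (the smoothest graph signal) while strictly attenuating every other spectral component, which is exactly the defining behavior of a low-pass graph filter.
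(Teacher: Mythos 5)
Your proof is correct, and its skeleton matches the paper's own argument: evaluate the denominator via $\lambda_0 = 1$ and the normalization $\sum_{l=0}^L \gamma_l = 1$, then bound the numerator with the triangle inequality using $\gamma_l > 0$ and $|\lambda_i| \le 1$. Where you genuinely depart from the paper is in the strictness step, and your version is the stronger one. The paper claims that $\sum_{l=0}^{L} \gamma_l |\lambda_i|^l = \sum_{l=0}^{L} \gamma_l$ cannot occur because Lemma~\ref{lemma:1} gives $\lambda_i < 1$; but that lemma (and the Perron--Frobenius argument behind it) only controls the top of the spectrum, not its bottom, so it does not exclude $\lambda_i = -1$, which really does occur for connected \emph{bipartite} graphs. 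In that case $|\lambda_i| = 1$, every term $\gamma_l |\lambda_i|^l$ equals $\gamma_l$, and the paper's chain of inequalities degrades to $\le 1$ rather than the claimed $< 1$. Your parity split repairs exactly this: writing $\beta_{\gamma,L}(-1) = P - Q$ with $P = \sum_{l \text{ even}} \gamma_l > 0$, $Q = \sum_{l \text{ odd}} \gamma_l > 0$, and $P + Q = 1$ gives $|P - Q| < 1$; in effect you observe that at $\lambda_i = -1$ it is the triangle inequality itself that becomes strict (the summands alternate in sign), whereas for $|\lambda_i| < 1$ it is the comparison $|\lambda_i|^l < 1$, $l \ge 1$, that is strict --- two different sources of strictness, and the paper only invokes the second. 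Your closing check that the initialization $\gamma_l = \alpha(1-\alpha)^l$, $\gamma_{L-1} = (1-\alpha)^{L-1}$ is positive and sums to one corresponds to the paper's computation of $\beta_{\gamma,L}(\lambda_0)$. In short: same route, but your explicit treatment of the bipartite boundary case fills a genuine gap in the paper's own proof.
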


\begin{proof}
Given the definition of \(\beta_{\gamma, L}\) and the setting of \(\gamma_l\), we have:
\begin{equation}
    \begin{aligned}
        \beta_{\gamma, L}(\lambda_0) &= \sum_{l=0}^{L}\gamma_l \lambda^l = \sum_{l=0}^{L}\gamma_l = \sum_{l=0}^{L-1}\alpha(1-\alpha)^l + (1-\alpha)^L = 1.
    \end{aligned}
\end{equation}
Since $\gamma_k > 0$, and following Lemma \autoref{lemma:1}, we have:
\begin{equation}
    \begin{aligned}
    \left|\beta_{\gamma, L}(\lambda_i) \right| = \left|\sum_{l=0}^{L}\gamma_l \lambda_i^l \right| \leq \sum_{l=0}^{L}\gamma_l \left |\lambda_i \right|^l 
     \le \sum_{l=0}^{L}\gamma_l 1^l = \sum_{l=0}^{L}\gamma_l =  1.
    \end{aligned}
\end{equation}

It is important to note that \(\sum_{l=0}^{L} \gamma_l \left|\lambda_i \right|^l = \sum_{l=0}^{L} \gamma_l \cdot 1^l\) cannot be achieved, as we assume the existence of at least one \(l_i \geq 0\) with \(i \ne 0\) and \(\lambda_i < 1\), as established in \Cref{lemma:1}. Therefore, the following inequality holds: $ \left|\frac{\beta_{\gamma, L}(\lambda_i)}{\beta_{\gamma, L}(\lambda_0)} \right| < 1.$
% \begin{equation}
%     \left|\frac{\beta_{\gamma, L}(\lambda_i)}{\beta_{\gamma, L}(\lambda_0)} \right| < 1.
% \end{equation}
\end{proof}

The Adaptive Heterogeneous Convolution is initially designed to approximate a low-pass filter, which facilitates smooth learning of the graph from the outset. We make the coefficients $\gamma_l$ learnable, enabling the model to adaptively adjust the relationships between different hops (or meta-path-based hops). This approach effectively addresses the challenge posed by complex real-world heterogeneous graphs, where different meta-paths often exhibit distinct and unique intra-hop distributions and relationships (as illustrated in \autoref{fig:local_homophily}).

Fixing $\gamma_l$ to strictly follow a low-pass filter can be sub-optimal, whereas allowing meta-path-specific $\gamma_l$ provides AHGNN with the flexibility to effectively handle multiple informative meta-paths. Additionally, this learnable coefficient design allows the model to accommodate potential high-frequency signals within the graphs that are common in heterophily graphs \cite{zhu2020beyond}.

\section{Experiments}
\label{sec:exp}
% In this section, we present the experimental settings and results. Additionally, we conduct a parameter analysis and an impact analysis of meta-paths in the supplementary material.

\subsection{Datasets}
\label{sec:datasets}

\begin{table}[htbp]
    \centering
    \caption{Statistics of real-world datasets. $F_\text{target}$ is the feature dimension of target nodes. $h$ for graph-level homophily ratio defined in \Cref{def:h_heter}.}
    \label{table:data-statictics}
    \resizebox{.999\linewidth}{!}{
        \begin{tabular}{lrrrrrrrrrr}
            \toprule
            & \#Nodes & \#Edges & \#Class & $F_\text{target}$ & $h$ \\
            \midrule
            DBLP & 26108 & 239566 & 4 & 334 & 0.81 \\
            IMDB & 11616 & 34212 & 3 & 3066 & 0.59 \\
            ACM & 9040 & 547814 & 4 & 1902 & 0.88 \\
            FB-American & 9473 & 495790 & 3 & 6386 & 0.53 \\
            FB-MIT & 9274 & 561700 & 3 & 6440 & 0.49 \\
            Actor & 16255 & 72425 & 7 & 5362 & 0.29 \\
            Ogbn-mag & 1939743 & 42182144 & 349 & 128 & 0.34 \\
            \bottomrule
        \end{tabular}
    }
\end{table}

We evaluate AHGNN with seven real-world heterogeneous graphs with various fields, scales and homophily ratios. ACM and DBLP \cite{lv2021we} are a homophily-based citation network. IMDB \cite{lv2021we}, represents a heterophily database of online movies and TV shows. FB-American \cite{guo2023homophily} and FB-MIT \cite{traud2012social}, part of the FB100, detail Facebook users in American universities. They are also heterophily datasets. Actor \cite{tang2009social} is a strong heterophily dataset about actors, directors, and writers based on Wikipedia pages. Lastly, Ogbn-mag \cite{hu2021ogb} is a large-scale heterogeneous citation network. For all datasets, the target node type is selected according to the settings in their respective original papers.

\subsection{Involved Baselines}
To comprehensively evaluate the performance of AHGNN, we conduct experiments across four categories of models: (\romannumeral 1) \textbf{Homogeneous GNNs}: Standard models designed for homogeneous graphs, including GCN~\cite{kipf2016semi} and GAT~\cite{velivckovic2017graph}. (\romannumeral 2) \textbf{Traditional Heterogeneous GNNs}: Models specifically developed for heterogeneous graphs, such as HetGNN~\cite{zhang2019heterogeneous}, HGT~\cite{hu2020heterogeneous}, MAGNN~\cite{fu2020magnn}, SHGN~\cite{lv2021we}, SeHGNN~\cite{yang2023simple}, HINormer~\cite{mao2023hinormer}, LSMPS~\cite{li2023long}, and Seq-HGNN~\cite{du2023seq}. (\romannumeral 3) \textbf{Heterophily-aware Heterogeneous GNNs}: Methods that handle both heterogeneity and heterophily, including Hetero$^2$Net~\cite{li2023hetero}, LatGRL~\cite{shen2025heterophily} and H$^2$Gformer~\cite{lin2024heterophily}. (\romannumeral 4) \textbf{GNNs Adapted for Heterogeneity}: We adapt heterophily-aware models using HDHGR techniques~\cite{guo2023homophily}, denoting the adapted versions with a "-HD" suffix. This group includes H2GCN-HD~\cite{zhu2020beyond}, GPRGNN-HD~\cite{chien2020adaptive}, GloGNN-HD~\cite{li2022finding}, ACMGNN-HD~\cite{luan2022revisiting}, and ALTGCN-HD~\cite{xu2023node}. Additionally, we equip two state-of-the-art HGNNs with HDHGR: LSMPS-HD~\cite{li2023long} and SeqHGNN-HD~\cite{du2023seq}.

% \begin{itemize}
%      \item \textbf{Homogeneous GNNs}: Standard models designed for homogeneous graphs, including GCN\cite{kipf2016semi} and GAT\cite{velivckovic2017graph}.
    
%     \item \textbf{Traditional Heterogeneous GNNs}: Models specifically developed for heterogeneous graphs, such as HetGNN\cite{zhang2019heterogeneous}, HGT\cite{hu2020heterogeneous}, MAGNN\cite{fu2020magnn}, SHGN\cite{lv2021we}, SeHGNN\cite{yang2023simple}, HINormer\cite{mao2023hinormer}, LSMPS\cite{li2023long}, and Seq-HGNN\cite{du2023seq}.
    
%     \item \textbf{Heterophily-aware Heterogeneous GNNs}: Approaches designed to handle both heterogeneity and heterophily, including Hetero$^2$Net\cite{li2023hetero} and LatGRL\cite{shen2024heterophily}.
    
%     \item \textbf{GNNs Adapted for Heterogeneity}: We adapt existing heterophily-aware models using HDHGR techniques\cite{guo2023homophily}, denoting these variants with a "-HD" suffix. This category includes H2GCN-HD\cite{zhu2020beyond}, GPRGNN-HD\cite{chien2020adaptive}, GloGNN-HD\cite{li2022finding}, ACMGNN-HD\cite{luan2022revisiting}, and ALTGCN-HD \cite{xu2023node}. We also equip two recent state-of-the-art HGNNs with HDHGR, LSMPS-HD \cite{li2023long} and SeqHGNN-HD \cite{du2023seq}.

% \end{itemize}

\begin{table*}[htbp]
\centering
\caption{Results on real-world datasets presented in Macro-F1 and Micro-F1 (scaled up by 100 for clarity), including mean and standard deviation over all runs and splits. The graph-level homophily ratio of each dataset (as defined in Definition \Cref{def:h_heter} in \autoref{sec:preliminary}) is displayed in brackets ($h$). The best results are highlighted in gray. 
}
\label{table:main-node-classification}
\resizebox{.9999\linewidth}{!}{
\setlength{\tabcolsep}{3pt}
\begin{tabular}{lcccccccccccccccccccc}
    \toprule
    &  \multicolumn{2}{c}{\textbf{Actor} (0.29)} & \multicolumn{2}{c}{\textbf{FB-MIT} (0.49)} & \multicolumn{2}{c}{\textbf{FB-American} (0.53)} & \multicolumn{2}{c}{\textbf{IMDB} (0.59)} & \multicolumn{2}{c}{\textbf{DBLP} (0.81)} & \multicolumn{2}{c}{\textbf{ACM} (0.88)} \\
    \cmidrule{2-13}
     & MacroF1 & MicroF1 & MacroF1 & MicroF1 & MacroF1 & MicroF1 & MacroF1 & MicroF1 & MacroF1 & MicroF1 & MacroF1 & MicroF1 \\
    \midrule
   GCN & 54.18{\tiny $\pm$0.33} & 64.99{\tiny $\pm$0.42} & 69.03{\tiny $\pm$2.01} & 72.05{\tiny $\pm$1.83} & 68.38{\tiny $\pm$1.24} & 71.83{\tiny $\pm$0.79} & 57.88{\tiny $\pm$1.18} & 64.82{\tiny $\pm$1.24} & 90.01{\tiny $\pm$0.34} & 91.29{\tiny $\pm$0.41} & 90.77{\tiny $\pm$0.34} & 91.90{\tiny $\pm$0.33} \\
GAT & 57.83{\tiny $\pm$1.28} & 63.10{\tiny $\pm$0.66} & 69.65{\tiny $\pm$2.35} & 72.33{\tiny $\pm$1.94} & 70.86{\tiny $\pm$1.75} & 71.49{\tiny $\pm$1.05} & 58.94{\tiny $\pm$1.35} & 64.86{\tiny $\pm$1.12} & 91.89{\tiny $\pm$0.83} & 92.27{\tiny $\pm$0.20} & 90.38{\tiny $\pm$0.24} & 91.86{\tiny $\pm$0.31} \\
\midrule
HetGNN & 61.48{\tiny $\pm$2.56} & 69.01{\tiny $\pm$2.68} & 63.14{\tiny $\pm$0.43} & 69.15{\tiny $\pm$0.80} & 60.47{\tiny $\pm$1.38} & 67.08{\tiny $\pm$0.83} & 53.46{\tiny $\pm$0.87} & 60.73{\tiny $\pm$1.49} & 91.76{\tiny $\pm$0.48} & 92.33{\tiny $\pm$0.41} & 87.19{\tiny $\pm$0.35} & 87.68{\tiny $\pm$0.22} \\
HGT & 63.72{\tiny $\pm$2.18} & 69.27{\tiny $\pm$1.34} & 63.44{\tiny $\pm$0.46} & 61.98{\tiny $\pm$0.33} & 59.65{\tiny $\pm$0.52} & 62.60{\tiny $\pm$0.41} & 63.07{\tiny $\pm$1.19} & 67.20{\tiny $\pm$1.61} & 93.01{\tiny $\pm$0.24} & 93.49{\tiny $\pm$0.25} & 90.97{\tiny $\pm$0.66} & 91.32{\tiny $\pm$0.89} \\
MAGNN & 66.74{\tiny $\pm$2.84} & 73.85{\tiny $\pm$1.39} & 72.01{\tiny $\pm$2.87} & 73.63{\tiny $\pm$2.66} & 71.63{\tiny $\pm$3.28} & 72.80{\tiny $\pm$3.01} & 67.36{\tiny $\pm$2.84} & 68.18{\tiny $\pm$2.03} & 93.28{\tiny $\pm$0.51} & 93.76{\tiny $\pm$0.45} & 91.90{\tiny $\pm$0.61} & 91.95{\tiny $\pm$0.85} \\
SHGN & 66.94{\tiny $\pm$1.37} & 76.06{\tiny $\pm$0.86} & 71.89{\tiny $\pm$2.45} & 72.70{\tiny $\pm$3.01} & 71.88{\tiny $\pm$2.71} & 72.93{\tiny $\pm$2.92} & 64.31{\tiny $\pm$1.35} & 67.05{\tiny $\pm$1.31} & 94.05{\tiny $\pm$0.31} & 94.25{\tiny $\pm$0.32} & 92.58{\tiny $\pm$0.61} & 93.01{\tiny $\pm$0.51} \\
SeHGNN & 67.51{\tiny $\pm$0.83} & 76.81{\tiny $\pm$0.48} & 72.07{\tiny $\pm$2.07} & 74.49{\tiny $\pm$1.94} & 70.39{\tiny $\pm$3.12} & 73.66{\tiny $\pm$2.18} & 67.11{\tiny $\pm$1.24} & 67.93{\tiny $\pm$1.93} & 94.24{\tiny $\pm$0.53} & 94.70{\tiny $\pm$0.41} & 93.27{\tiny $\pm$0.72} & 93.31{\tiny $\pm$0.85} \\
HINormer & 67.12{\tiny $\pm$2.18} & 77.17{\tiny $\pm$2.85} & 71.23{\tiny $\pm$1.53} & 72.08{\tiny $\pm$2.06} & 71.03{\tiny $\pm$2.64} & 72.42{\tiny $\pm$2.66} & 64.09{\tiny $\pm$1.56} & 68.01{\tiny $\pm$1.96} & 94.20{\tiny $\pm$0.45} & 94.65{\tiny $\pm$0.34} & 92.66{\tiny $\pm$0.73} & 93.34{\tiny $\pm$0.85} \\
LSMPS & 68.43{\tiny $\pm$1.09} & 77.13{\tiny $\pm$0.65} & 72.18{\tiny $\pm$1.97} & 74.10{\tiny $\pm$2.31} & 71.85{\tiny $\pm$2.10} & 73.09{\tiny $\pm$1.94} & 67.60{\tiny $\pm$1.38} & 68.19{\tiny $\pm$1.85} & 94.97{\tiny $\pm$0.48} & 95.21{\tiny $\pm$0.63} & 93.98{\tiny $\pm$0.83} & 94.26{\tiny $\pm$0.53} \\
Seq-HGNN & 69.38{\tiny $\pm$1.42} & 77.41{\tiny $\pm$0.96} & 71.64{\tiny $\pm$2.18} & 73.51{\tiny $\pm$1.92} & 71.99{\tiny $\pm$2.07} & 73.01{\tiny $\pm$1.55} & 67.36{\tiny $\pm$1.42} & 68.21{\tiny $\pm$1.45} & 95.45{\tiny $\pm$0.48} & 95.63{\tiny $\pm$0.67} & 94.01{\tiny $\pm$0.53} & 94.23{\tiny $\pm$0.38} \\
\midrule
Hetero$^2$Net & 70.17{\tiny $\pm$1.38} & 77.85{\tiny $\pm$1.76} & 71.41{\tiny $\pm$1.78} & 72.43{\tiny $\pm$1.91} & 71.15{\tiny $\pm$1.84} & 72.33{\tiny $\pm$1.66} & 65.18{\tiny $\pm$0.48} & 68.16{\tiny$\pm$0.56} & 94.03{\tiny $\pm$0.35} & 94.46{\tiny $\pm$0.37} & 92.84{\tiny $\pm$0.37} & 93.31{\tiny $\pm$0.75} \\
LatGRL & 70.08{\tiny $\pm$1.20} & 77.52{\tiny $\pm$0.78} & 71.76{\tiny $\pm$1.70} & 74.56{\tiny $\pm$1.96} & 71.01{\tiny $\pm$2.93} & 72.58{\tiny $\pm$1.94} & 67.08{\tiny $\pm$1.28} & 68.38{\tiny $\pm$1.36} & 92.37{\tiny $\pm$0.28} & 94.38{\tiny $\pm$0.53} & 92.44{\tiny $\pm$0.45} & 93.54{\tiny $\pm$0.56} \\
H$^2$Gformer & 69.05{\tiny $\pm$1.60} & 78.42{\tiny $\pm$2.93} & 71.36{\tiny $\pm$1.79} & 72.27{\tiny $\pm$1.91} & 70.50{\tiny $\pm$2.48} & 72.51{\tiny $\pm$2.60} & 65.33{\tiny $\pm$1.54} & 67.76{\tiny $\pm$2.07} & 92.81{\tiny $\pm$0.45} & 93.48{\tiny $\pm$0.68} & 93.10{\tiny $\pm$0.81} & 93.41{\tiny $\pm$0.85} \\
\midrule
H2GCN-HD & 64.75{\tiny $\pm$1.31} & 74.09{\tiny $\pm$1.32} & 71.36{\tiny $\pm$3.08} & 73.78{\tiny $\pm$2.76} & 70.16{\tiny $\pm$1.21} & 72.05{\tiny $\pm$1.76} & 58.87{\tiny $\pm$1.64} & 59.39{\tiny $\pm$1.44} & 92.32{\tiny $\pm$0.43} & 92.81{\tiny $\pm$0.38} & 87.54{\tiny $\pm$0.56} & 90.44{\tiny $\pm$0.73} \\
GPRGNN-HD & 67.12{\tiny $\pm$1.08} & 75.53{\tiny $\pm$1.62} & 72.17{\tiny $\pm$1.40} & 74.17{\tiny $\pm$1.52} & 70.01{\tiny $\pm$1.23} & 71.57{\tiny $\pm$1.31} & 57.72{\tiny $\pm$0.64} & 61.03{\tiny $\pm$0.16} & 93.56{\tiny $\pm$0.53} & 94.17{\tiny $\pm$0.53} & 87.40{\tiny $\pm$0.43} & 90.29{\tiny $\pm$0.53} \\
GloGNN-HD & 69.33{\tiny $\pm$2.32} & 77.63{\tiny $\pm$1.84} & 71.35{\tiny $\pm$1.96} & 72.36{\tiny $\pm$1.95} & 71.01{\tiny $\pm$1.68} & 72.31{\tiny $\pm$2.04} & 65.14{\tiny $\pm$1.01} & 67.75{\tiny $\pm$1.52} & 94.22{\tiny $\pm$0.64} & 94.35{\tiny $\pm$0.41} & 92.05{\tiny $\pm$0.46} & 92.75{\tiny $\pm$0.34} \\
ALTGNN-HD & 69.20{\tiny $\pm$2.28} & 77.62{\tiny $\pm$1.81} & 71.18{\tiny $\pm$1.93} & 72.41{\tiny $\pm$1.90} & 71.15{\tiny $\pm$1.72} & 72.50{\tiny $\pm$2.08} & 64.78{\tiny $\pm$0.98} & 67.32{\tiny $\pm$1.47} & 93.95{\tiny $\pm$0.68} & 93.89{\tiny $\pm$0.43} & 93.11{\tiny $\pm$0.66} & 93.60{\tiny $\pm$0.31} \\
ACMGCN-HD & 69.45{\tiny $\pm$2.35} & 77.81{\tiny $\pm$1.87} & 71.42{\tiny $\pm$1.99} & 72.48{\tiny $\pm$1.93} & 71.92{\tiny $\pm$1.64} & 72.25{\tiny $\pm$2.01} & 65.25{\tiny $\pm$1.04} & 67.60{\tiny $\pm$1.50} & 94.10{\tiny $\pm$0.62} & 94.68{\tiny $\pm$0.45} & 92.98{\tiny $\pm$0.43} & 93.64{\tiny $\pm$0.29} \\

LSMPS-HD & 70.07{\tiny $\pm$1.21} & 78.60{\tiny $\pm$0.61} & 71.40{\tiny $\pm$2.01} & 72.26{\tiny $\pm$2.34} & 72.45{\tiny $\pm$2.21} & 72.88{\tiny $\pm$1.80} & 67.06{\tiny $\pm$1.36} & 68.16{\tiny $\pm$1.82} & 94.44{\tiny $\pm$0.49} & 94.97{\tiny $\pm$0.64} & 93.98{\tiny $\pm$1.05} & 93.62{\tiny $\pm$0.54}  \\

SeqHGNN-HD & 69.50{\tiny $\pm$1.31} & 76.78{\tiny $\pm$0.78} & 71.39{\tiny $\pm$1.86} & 72.10{\tiny $\pm$2.06} & 71.99{\tiny $\pm$2.53} & 72.92{\tiny $\pm$1.21} & 67.30{\tiny $\pm$1.26} & 67.69{\tiny $\pm$1.35} & 93.12{\tiny $\pm$0.03} & 93.87{\tiny $\pm$0.86} & 93.27{\tiny $\pm$0.14} & 93.81{\tiny $\pm$0.20}  \\

% HGT-HD & 71.06{\tiny $\pm$2.86} & 77.41{\tiny $\pm$1.88} & 68.92{\tiny $\pm$1.75} & 72.03{\tiny $\pm$1.87} & 67.81{\tiny $\pm$2.76} & 72.91{\tiny $\pm$1.97} & 63.87{\tiny $\pm$1.32} & 68.01{\tiny $\pm$1.73} & 93.79{\tiny $\pm$0.87} & 94.01{\tiny $\pm$0.55} & 90.32{\tiny $\pm$0.53} & 92.05{\tiny $\pm$0.63} \\
% SHGN-HD & 69.01{\tiny $\pm$2.31} & 77.73{\tiny $\pm$1.80} & 71.95{\tiny $\pm$2.01} & 72.46{\tiny $\pm$1.90} & 72.35{\tiny $\pm$1.76} & 73.43{\tiny $\pm$2.01} & 65.86{\tiny $\pm$1.09} & 67.94{\tiny $\pm$1.47} & 94.37{\tiny $\pm$0.58} & 94.59{\tiny $\pm$0.45} & 92.45{\tiny $\pm$0.64} & 93.43{\tiny $\pm$0.85} \\
\midrule
AHGNN & \hl{74.89\tiny $\pm$0.96} & \hl{82.13\tiny $\pm$0.08} & \hl{73.81\tiny $\pm$2.07} & \hl{76.32\tiny $\pm$1.59} & \hl{73.75\tiny$\pm$1.42} & \hl{75.41\tiny$\pm$0.80} & \hl{69.81\tiny$\pm$1.34} & \hl{69.83\tiny $\pm$1.39} & \hl{95.84\tiny $\pm$0.37} & \hl{96.33\tiny$\pm$ 0.45} & \hl{94.45\tiny $\pm$0.48} & \hl{94.60\tiny $\pm$0.47} \\
\textit{Improvement} & \textit{4.72} & \textit{4.53} & \textit{1.63} & \textit{1.76} & \textit{1.3} & \textit{1.75} & \textit{2.21} & \textit{1.45} & \textit{0.39} & \textit{0.70} & \textit{0.44} & \textit{0.36} \\
    \bottomrule
\end{tabular}
}
\end{table*}

\subsection{Settings}
\label{sec:settings}
We evaluate all the models on the node classification task \cite{kipf2016semi}. We adopt a train/validation/test split ratio of 60\%/20\%/20\% for all the datasets. We use the Adam optimizer \cite{kingma2014adam} with a learning rate $\eta \in \{0.5, 1, 5 \}\times 10^{-3}$, a maximum weight decay of $5\times 10^{-6}$, and a maximum of 200 epochs to train AHGNN. For all models, the hidden dimensions are set to 256 for fairness (except for ogbn-mag we adopt 512). We search the hyper-parameters in the same scope mentioned above for baselines without public training scripts. For ogbn-mag, We adopt a four-stage training strategy with a maximum of 300 epochs for all models. No additional embeddings are adopted. For AHGNN, we choose $L_1$ in $\{2, 3, 4\}$, $L_2$ in $\{2, 3, 4\}$ and $\alpha$ in $\{0.25, 0.4, 0.6, 0.85\}$. We set $\lambda_1=\lambda_2=10^{-4}$ in this paper. We run the experiments with five NVIDIA RTX 4090 with 24GB GPU Memory. 

\subsection{Main Evaluation Results}

% \begin{wraptable}{r}{7.5cm}
\begin{table}[htbp]
    \centering
    \small
    \caption{Results on ogbn-mag are reported Macro-F1 and Micro-F1 (scaled up by 100 for clarity) across all runs and splits. Best results are highlighted, excluding baselines without public code or facing out-of-memory issues.}
    \label{table:main-classification-ogb}
    \begin{tabular}{lccccccc}
        \toprule
        & Macro-F1 & Micro-F1 \\
        \midrule
        GCN&51.91{\tiny $\pm$0.40}&52.10{\tiny $\pm$0.55}\\
        GAT&52.48{\tiny $\pm$0.81}&54.03{\tiny $\pm$0.45}\\
        HGT&54.14{\tiny $\pm$0.52}&54.18{\tiny $\pm$0.09}\\
        SHGN&58.30{\tiny $\pm$0.49}&56.18{\tiny $\pm$0.85}\\
        SeHGNN&58.29{\tiny $\pm$0.33}&58.23{\tiny $\pm$0.65}\\
        HINormer&57.19{\tiny $\pm$0.39}&57.05{\tiny $\pm$0.84}\\
        LSMPS&58.94{\tiny $\pm$0.28}&57.62{\tiny $\pm$0.45}\\
        Seq-HGNN&58.28{\tiny $\pm$0.45}&58.36{\tiny $\pm$0.38}\\
        Hetero$^2$Net&58.51{\tiny $\pm$0.16}&58.60{\tiny $\pm$0.58}\\
        LatGCL&57.18{\tiny $\pm$0.63}&58.05{\tiny $\pm$0.40}\\
        \midrule
        AHGNN& \hl{60.42\tiny $\pm$0.29}&\hl{60.69\tiny $\pm$0.19}\\
        \bottomrule
    \end{tabular}
\end{table}
% \end{wraptable}

As shown in \autoref{table:main-node-classification} and \autoref{table:main-classification-ogb} \footnote{HDHGR techniques are computationally intensive, leading to Out-Of-Memory (OOM) issues on the Ogbn-mag dataset, which contains 42 million edges.}, AHGNN achieves state-of-the-art performance, particularly on heterophilous datasets. On heterophily graphs, AHGNN effectively disentangles complex heterophily distributions. For homophilous graphs, it also remains competitive. Notably, on the strongly heterophilous Actor dataset, AHGNN outperforms baselines with up to a 4.32\% improvement in Micro-F1.

Key observations are as follows:
(i) Conventional HGNNs perform well on homophilous graphs (e.g., DBLP) but show limited effectiveness on heterophilous graphs.
(ii) Heterophily-aware models, such as Hetero$^2$Net, LatGCL, H$^2$Gformer, and rewired variants (denoted by the -HD suffix), remain suboptimal on datasets with mixed homophily and heterophily (e.g., IMDB, FB-MIT, FB-American), often underperforming even compared to standard HGNNs.  
(iii) While HDHGR techniques generally enhance performance on heterophilous graphs, they may degrade performance on homophilous datasets; for example, LSMPS-HD and SeqHGNN-HD perform worse than their original versions on ACM and DBLP. In general, most baselines fail to address the two challenges outlined in \autoref{sec:intro}, resulting in suboptimal performance.

\subsection{AHGNN under Few-shot Scenarios}

\begin{table*}[htbp]
\centering
\caption{Few-shot learning results on real-world datasets presented in Macro-F1 and Micro-F1 (scaled up by 100 for clarity), including mean and standard deviation over all runs and splits. The graph-level homophily ratio of each dataset is displayed in brackets ($h$). The best results are highlighted in gray. 
}
\label{table:few-shot-node-classification}
\resizebox{.9999\linewidth}{!}{
\setlength{\tabcolsep}{3pt}
% \small
\begin{tabular}{lcccccccccccccccccc}
    \toprule
    &  \multicolumn{2}{c}{\textbf{Actor} (0.29)} & \multicolumn{2}{c}{\textbf{FB-MIT} (0.49)} & \multicolumn{2}{c}{\textbf{FB-American} (0.53)} & \multicolumn{2}{c}{\textbf{IMDB} (0.59)} & \multicolumn{2}{c}{\textbf{DBLP} (0.81)} & \multicolumn{2}{c}{\textbf{ACM} (0.88)} \\
    \cmidrule{2-13}
     & MacroF1 & MicroF1 & MacroF1 & MicroF1 & MacroF1 & MicroF1 & MacroF1 & MicroF1 & MacroF1 & MicroF1 & MacroF1 & MicroF1 \\
    \midrule
    GCN & 45.98{\tiny $\pm$1.30} & 48.03{\tiny $\pm$1.93} & 38.65{\tiny $\pm$0.95} & 41.32{\tiny $\pm$1.87} & 35.97{\tiny $\pm$1.57} & 38.78{\tiny $\pm$1.33} & 35.70{\tiny $\pm$1.68} & 36.01{\tiny $\pm$1.56} & 88.37{\tiny $\pm$0.72} & 89.64{\tiny $\pm$0.38} & 87.44{\tiny $\pm$0.98} & 88.68{\tiny $\pm$0.73} \\
GAT & 46.75{\tiny $\pm$1.57} & 47.75{\tiny $\pm$2.31} & 39.41{\tiny $\pm$1.34} & 41.74{\tiny $\pm$1.58} & 35.76{\tiny $\pm$1.34} & 38.09{\tiny $\pm$1.07} & 35.88{\tiny $\pm$1.56} & 36.21{\tiny $\pm$1.78} & 89.96{\tiny $\pm$0.45} & 90.18{\tiny $\pm$0.18} & 88.01{\tiny $\pm$0.75} & 88.64{\tiny $\pm$0.54} \\
\midrule
HetGNN & 44.91{\tiny $\pm$1.84} & 48.85{\tiny $\pm$1.03} & 37.23{\tiny $\pm$1.97} & 38.56{\tiny $\pm$1.39} & 35.46{\tiny $\pm$1.24} & 38.66{\tiny $\pm$1.63} & 35.12{\tiny $\pm$2.47} & 36.80{\tiny $\pm$3.01} & 89.31{\tiny $\pm$0.86} & 90.85{\tiny $\pm$1.02} & 86.36{\tiny $\pm$0.74} & 88.12{\tiny $\pm$1.01} \\
HGT & 46.77{\tiny $\pm$1.31} & 50.22{\tiny $\pm$1.95} & 38.64{\tiny $\pm$1.65} & 38.85{\tiny $\pm$1.09} & 35.78{\tiny $\pm$1.60} & 38.53{\tiny $\pm$1.32} & 37.65{\tiny $\pm$2.53} & 38.52{\tiny $\pm$2.80} & 91.05{\tiny $\pm$0.76} & 91.36{\tiny $\pm$0.55} & 89.30{\tiny $\pm$0.63} & 89.84{\tiny $\pm$0.70} \\
MAGNN & 45.90{\tiny $\pm$1.63} & 49.64{\tiny $\pm$1.88} & 41.81{\tiny $\pm$1.83} & 42.89{\tiny $\pm$2.04} & 37.01{\tiny $\pm$0.86} & 40.78{\tiny $\pm$1.51} & 40.12{\tiny $\pm$2.97} & 41.02{\tiny $\pm$3.01} & 91.07{\tiny $\pm$0.48} & 91.74{\tiny $\pm$0.67} & 89.45{\tiny $\pm$0.47} & 89.73{\tiny $\pm$0.74} \\
SHGN & 45.86{\tiny $\pm$1.92} & 50.63{\tiny $\pm$2.07} & 40.07{\tiny $\pm$1.29} & 42.73{\tiny $\pm$1.84} & 36.31{\tiny $\pm$1.63} & 38.96{\tiny $\pm$1.47} & 39.97{\tiny $\pm$3.08} & 40.32{\tiny $\pm$3.31} & 91.45{\tiny $\pm$0.45} & 92.01{\tiny $\pm$0.30} & 90.01{\tiny $\pm$0.45} & 90.47{\tiny $\pm$0.73} \\
SeHGNN & 46.37{\tiny $\pm$1.04} & 50.04{\tiny $\pm$2.53} & 41.51{\tiny $\pm$1.57} & 44.12{\tiny $\pm$1.78} & 36.83{\tiny $\pm$1.45} & 39.77{\tiny $\pm$1.57} & 40.98{\tiny $\pm$1.94} & 41.37{\tiny $\pm$1.53} & 92.04{\tiny $\pm$0.56} & 92.93{\tiny $\pm$0.42} & 91.02{\tiny $\pm$0.45} & 91.32{\tiny $\pm$0.47} \\
HINormer & 46.83{\tiny $\pm$1.57} & 50.97{\tiny $\pm$3.81} & 41.86{\tiny $\pm$1.36} & 43.90{\tiny $\pm$1.37} & 36.86{\tiny $\pm$1.73} & 40.01{\tiny $\pm$1.89} & 40.83{\tiny $\pm$2.10} & 41.14{\tiny $\pm$1.63} & 91.85{\tiny $\pm$0.47} & 92.77{\tiny $\pm$0.64} & 90.01{\tiny $\pm$0.66} & 90.34{\tiny $\pm$0.62} \\
LSMPS & 47.44{\tiny $\pm$1.50} & 50.65{\tiny $\pm$1.97} & 41.73{\tiny $\pm$1.97} & 44.42{\tiny $\pm$0.74} & 36.97{\tiny $\pm$1.90} & 39.64{\tiny $\pm$1.66} & 41.03{\tiny $\pm$2.04} & 41.18{\tiny $\pm$1.24} & 91.86{\tiny $\pm$0.76} & 92.61{\tiny $\pm$0.81} & 91.85{\tiny $\pm$0.45} & 92.41{\tiny $\pm$0.86} \\
SeqHGNN & 47.31{\tiny $\pm$1.74} & 50.73{\tiny $\pm$1.95} & 42.08{\tiny $\pm$1.64} & 44.51{\tiny $\pm$1.46} & 37.59{\tiny $\pm$1.47} & 41.05{\tiny $\pm$1.37} & 40.56{\tiny $\pm$2.37} & 40.92{\tiny $\pm$2.87} & 92.10{\tiny $\pm$0.33} & 93.01{\tiny $\pm$0.29} & 91.68{\tiny $\pm$0.53} & 92.54{\tiny $\pm$0.42} \\

\midrule
Hetero$^2$Net & 46.87{\tiny $\pm$2.05} & 50.56{\tiny $\pm$1.05} & 40.65{\tiny $\pm$1.38} & 43.31{\tiny $\pm$1.89} & 36.83{\tiny $\pm$1.37} & 37.83{\tiny $\pm$1.74} & 40.37{\tiny $\pm$2.48} & 40.44{\tiny $\pm$2.34} & 92.07{\tiny $\pm$0.42} & 92.32{\tiny $\pm$0.67} & 89.75{\tiny $\pm$0.53} & 90.11{\tiny $\pm$0.77} \\
LatGRL & 47.32{\tiny $\pm$1.20} & 51.06{\tiny $\pm$0.78} & 41.41{\tiny $\pm$1.70} & 44.84{\tiny $\pm$1.96} & 38.26{\tiny $\pm$2.93} & 41.31{\tiny $\pm$1.94} & 40.96{\tiny $\pm$1.28} & 41.66{\tiny $\pm$1.36} & 89.58{\tiny $\pm$0.28} & 90.64{\tiny $\pm$0.53} & 90.01{\tiny $\pm$0.08} & 90.32{\tiny $\pm$0.37} \\
H$^2$Gformer & 47.81{\tiny $\pm$1.72} & 50.69{\tiny $\pm$3.58} & 41.94{\tiny $\pm$1.13} & 43.82{\tiny $\pm$1.31} & 35.17{\tiny $\pm$1.49} & 40.68{\tiny $\pm$1.72} & 40.84{\tiny $\pm$1.93} & 41.05{\tiny $\pm$1.47} & 91.43{\tiny $\pm$0.39} & 92.58{\tiny $\pm$0.52} & 89.76{\tiny $\pm$0.87} & 90.16{\tiny $\pm$0.50} \\
\midrule
H2GCN-HD & 47.02{\tiny $\pm$1.68} & 48.51{\tiny $\pm$2.46} & 40.04{\tiny $\pm$1.90} & 42.66{\tiny $\pm$1.41} & 35.75{\tiny $\pm$2.02} & 38.40{\tiny $\pm$1.83} & 35.62{\tiny $\pm$2.97} & 36.24{\tiny $\pm$2.03} & 90.04{\tiny $\pm$0.65} & 90.75{\tiny $\pm$0.77} & 86.36{\tiny $\pm$0.53} & 87.10{\tiny $\pm$0.42} \\
LINKX-HD & 46.73{\tiny $\pm$1.34} & 47.01{\tiny $\pm$1.42} & 38.12{\tiny $\pm$1.58} & 41.46{\tiny $\pm$1.34} & 34.85{\tiny $\pm$1.87} & 38.79{\tiny $\pm$1.90} & 36.22{\tiny $\pm$2.17} & 36.94{\tiny $\pm$2.87} & 90.66{\tiny $\pm$0.45} & 91.86{\tiny $\pm$0.73} & 86.64{\tiny $\pm$0.67} & 86.86{\tiny $\pm$1.02} \\
GloGNN-HD & 47.11{\tiny $\pm$2.32} & 50.65{\tiny $\pm$1.84} & 40.51{\tiny $\pm$1.96} & 40.99{\tiny $\pm$1.95} & 37.01{\tiny $\pm$1.68} & 39.42{\tiny $\pm$2.04} & 40.07{\tiny $\pm$1.01} & 40.42{\tiny $\pm$1.52} & 91.38{\tiny $\pm$0.64} & 91.48{\tiny $\pm$0.41} & 89.46{\tiny $\pm$0.35} & 91.03{\tiny $\pm$0.45} \\
ALTGNN-HD & 47.48{\tiny $\pm$2.28} & 50.67{\tiny $\pm$1.81} & 40.74{\tiny $\pm$1.93} & 41.42{\tiny $\pm$1.90} & 37.35{\tiny $\pm$1.72} & 40.53{\tiny $\pm$2.08} & 40.64{\tiny $\pm$0.98} & 40.75{\tiny $\pm$1.47} & 90.93{\tiny $\pm$0.68} & 90.74{\tiny $\pm$0.43} & 90.74{\tiny $\pm$0.46} & 91.75{\tiny $\pm$0.37} \\
ACMGCN-HD & 47.51{\tiny $\pm$2.35} & 50.41{\tiny $\pm$1.87} & 40.91{\tiny $\pm$1.99} & 41.31{\tiny $\pm$1.93} & 37.23{\tiny $\pm$1.64} & 40.46{\tiny $\pm$2.01} & 40.32{\tiny $\pm$1.04} & 40.71{\tiny $\pm$1.50} & 91.08{\tiny $\pm$0.62} & 91.39{\tiny $\pm$0.45} & 90.63{\tiny $\pm$0.63} & 90.96{\tiny $\pm$0.46} \\

LSMPS-HD & 47.56{\tiny $\pm$1.44} & 50.25{\tiny $\pm$1.86} & 41.92{\tiny $\pm$1.84} & 44.95{\tiny $\pm$0.86} & 37.62{\tiny $\pm$2.08} & 40.41{\tiny $\pm$1.87} & 41.05{\tiny $\pm$1.82} & 40.95{\tiny $\pm$1.37} & 91.67{\tiny $\pm$0.66} & 92.56{\tiny $\pm$0.67} & 91.11{\tiny $\pm$0.50} & 91.62{\tiny $\pm$0.95}   \\

SeqHGNN-HD & 47.45{\tiny $\pm$1.57} & 50.78{\tiny $\pm$1.93} & 42.40{\tiny $\pm$1.49} & 45.68{\tiny $\pm$1.50} & 37.94{\tiny $\pm$1.28} & 41.72{\tiny $\pm$1.08} & 41.00{\tiny $\pm$2.46} & 41.42{\tiny $\pm$2.92} & 92.14{\tiny $\pm$0.48} & 92.40{\tiny $\pm$0.78} & 90.96{\tiny $\pm$0.56} & 91.45{\tiny $\pm$0.46} \\

% HGT-HD & 47.34{\tiny $\pm$2.31} & 51.53{\tiny $\pm$1.47} & 40.13{\tiny $\pm$2.45} & 40.34{\tiny $\pm$1.68} & 35.89{\tiny $\pm$2.01} & 39.94{\tiny $\pm$1.98} & 38.07{\tiny $\pm$2.04} & 38.71{\tiny $\pm$2.01} & 90.63{\tiny $\pm$0.35} & 91.40{\tiny $\pm$0.38} & 88.17{\tiny $\pm$0.74} & 89.45{\tiny $\pm$0.36} \\
% SHGN-HD & 46.84{\tiny $\pm$1.84} & 49.73{\tiny $\pm$1.73} & 41.99{\tiny $\pm$1.34} & 43.73{\tiny $\pm$1.45} & 37.83{\tiny $\pm$1.37} & 41.76{\tiny $\pm$2.44} & 40.87{\tiny $\pm$3.72} & 40.90{\tiny $\pm$2.87} & 91.65{\tiny $\pm$0.74} & 92.05{\tiny $\pm$0.42} & 90.37{\tiny $\pm$0.51} & 91.03{\tiny $\pm$0.42} \\

\midrule
AHGNN & \hl{50.58\tiny $\pm$2.01} & \hl{54.52\tiny $\pm$1.70} & \hl{43.89\tiny $\pm$0.97} & \hl{45.57\tiny $\pm$1.15} & \hl{39.58\tiny $\pm$0.20} & \hl{43.81\tiny $\pm$1.43} & \hl{42.05\tiny $\pm$2.59} & \hl{42.57\tiny $\pm$2.60} & \hl{93.12\tiny $\pm$0.56} & \hl{93.48\tiny $\pm$0.56} & \hl{91.87\tiny $\pm$0.378} & \hl{92.99\tiny $\pm$0.38} \\
\textit{Improvement} & \textit{3.02} & \textit{2.99} & \textit{1.90} & \textit{1.67} & \textit{1.32} & \textit{2.05} & \textit{1.07} & \textit{1.30} & \textit{1.02} & \textit{0.47} & \textit{1.24} & \textit{1.24} \\
    \bottomrule
\end{tabular}
}
\end{table*}

We conduct a few-shot experiment, as detailed in \autoref{table:few-shot-node-classification}, where 20 nodes per class were randomly selected for the training set, and the remaining nodes were split evenly between validation and test sets. AHGNN consistently outperforms other models across all datasets, with particularly notable improvements on IMDB, FB-MIT, and Actor. On the Actor dataset, AHGNN achieves approximately a 3\% increase in Micro-F1 scores. These results further validate AHGNN's effectiveness under few-shot settings.

\subsection{AHGNN with Varying Heterophily Ratios}

We further conduct an experiment using a synthetic dataset, syn-DBLP, to evaluate the performance of AHGNN across different heterophily ratios compared to other models. Following \cite{zhu2020beyond}, by randomly assigning edges between nodes, we control the graph-level homophily of syn-DBLP to range from $0.8$ evenly to $0.1$. \autoref{fig:syn-exp} shows the results, where AHGNN generally achieves more notable performance advancement as the homophily ratio decreases. Such results are consistent with the observation in \autoref{table:data-statictics}, the stronger the heterophily, the larger the performance advancement.

\begin{figure}[htbp]
    \centering
    \begin{subfigure}[b]{\linewidth}
        \centering
        \includegraphics[width=\linewidth]{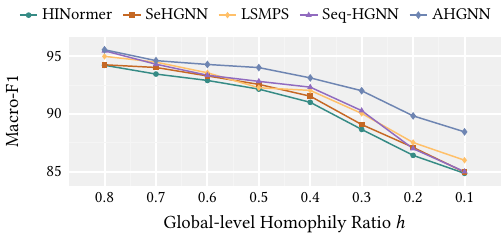}
    \end{subfigure}
    \hfill
    \begin{subfigure}[b]{\linewidth}
        \centering
        \includegraphics[width=\linewidth]{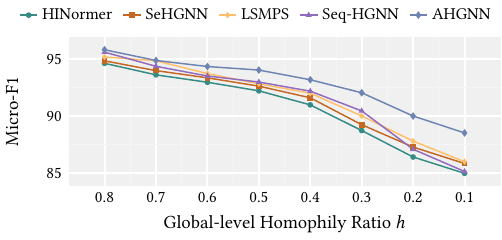}
    \end{subfigure}
    \caption{Results on syn-DBLP presented in Macro-F1 and Micro-F1 (scaled up by 100 for clarity).}
    \label{fig:syn-exp}
    % \vspace{-2mm}
\end{figure}

\subsection{Ablation Study}

\begin{table}[htbp]
    \centering
    % \small
    \caption{Ablation Study. "w/o" denotes without. Scores are scaled up by 100 for clarity. Here AHC stands for Adaptive Heterogeneous Convolution, C2F stands for Coarse-to-Fine Semantic Fusion, and LP for label propagation.}
    \label{table:ablation}
    \setlength{\tabcolsep}{3pt}
    \resizebox{.999\linewidth}{!}{
    \begin{tabular}{lcccccccccc}
        \toprule
        & \multicolumn{2}{c}{\textbf{IMDB} (0.59)}  & \multicolumn{2}{c}{\textbf{FB-MIT} (0.49)} & \multicolumn{2}{c}{\textbf{Actor} (0.29)} \\
         \cmidrule{2-7}
        & MacroF1 & MicroF1 & MacroF1 & MicroF1 & MacroF1 & MicroF1 \\
        \midrule
        AHGNN &69.81{\tiny $\pm$1.34}&69.83{\tiny $\pm$1.39}&73.81{\tiny $\pm$2.07}&76.32{\tiny $\pm$1.59}&74.89{\tiny $\pm$0.96}&82.13{\tiny $\pm$0.08} \\
        w/o AHC &65.01{\tiny $\pm$1.29}&66.14{\tiny $\pm$ 0.63}&71.96{\tiny $\pm$1.72}&72.84{\tiny $\pm$1.41}&69.10{\tiny $\pm$1.10}&76.48{\tiny $\pm$0.63} \\
        w/o C2F &67.80{\tiny $\pm$1.08}&68.17{\tiny $\pm$ 0.84}&70.01{\tiny $\pm$2.18}&72.18{\tiny $\pm$1.33}&71.49{\tiny $\pm$1.30}&78.35{\tiny $\pm$0.91} \\
        w/o LP &67.15{\tiny $\pm$2.47}&67.38{\tiny $\pm$ 2.48}&72.41{\tiny $\pm$1.49}&75.16{\tiny $\pm$1.72}&70.31{\tiny $\pm$0.72}&78.11{\tiny $\pm$0.29} \\
        w/o $L_2$-Norm &68.93{\tiny $\pm$1.03}&69.07{\tiny $\pm$1.06}&73.18{\tiny $\pm$1.78}&75.65{\tiny $\pm$1.93}&71.24{\tiny $\pm$0.82}&79.31{\tiny $\pm$0.51} \\ 
        \bottomrule
    \end{tabular}
    }
\end{table}

To assess the effectiveness of individual components in AHGNN, we conduct an ablation study by systematically disabling each component in isolation. For the \textit{Adaptive Heterogeneous Convolution}, we fix \(\gamma = 1\) uniformly across all meta-paths. For \textit{Coarse-to-Fine Semantic Fusion}, we replace it with a simple mean of the embeddings. The results, presented in \autoref{table:ablation}, yield several insights: (\romannumeral 1) Components contribute to performance improvements, with Adaptive Heterogeneous Convolution being the most critical. (\romannumeral 2) The impact of each component varies across datasets, with the largest gains observed on Actor, likely due to its stronger heterophily compared to IMDB and FB-MIT.

\subsection{Efficiency Analysis}

\label{sec:efficiency}

\begin{table}[htbp]
% \begin{wraptable}{r}{0.59\textwidth}
    \centering
    \small
    \caption{Efficiency analysis on Actor, including pre-calculation time (second), training time (second), and peak GPU memory consumption (MB). \texttt{-} for not-applicable. Avg Rank is the average rank on both time and memory costs. }
    \label{table:efficiency}
    \begin{tabular}{lrrrrr}
        \toprule 
        & Pre-Cal. & Training & Memory & Avg Rank \\
        \midrule
        HGT & - & 109.8 & 9415 & \#5.0\\
        SeHGNN & 7.38 & 18.36 & 7317 & \#2.0 \\
        LMSPS* & 10.04 & 79.69 & 3083 & \#3.0\\
        Seq-HGNN & - & 76.26 & 8913 & \#4.5\\
        LatGRL & - & 52.57 & 8547 & \#3.5\\
        \midrule
        AHGNN & 8.81 & 19.46 & 7103 & \#2.0 \\
        \bottomrule
    \end{tabular}
% \end{wraptable}
\end{table}

We conduct an efficiency analysis on AHGNN and other baselines\footnote{LMSPS's time costs were recorded for search and train stages.} on Actor. This analysis estimates the time (in seconds) and peak GPU memory usage (in MB) during pre-calculation (if applicable) and training stages. We calculate the average ranking of each model based on both time and memory considerations. Results in \autoref{table:efficiency} reveal that AHGNN is a lightweight model efficient in both time and space. On average, AHGNN ranks among the most efficient models. While consuming slightly more space than SeHGNN, AHGNN offers significant performance improvements, as shown in \autoref{table:main-node-classification}. This trade-off between efficiency and performance is considered worthwhile. 

\subsection{Parameter Analysis}

\begin{figure*}[htbp]
    \centering
    \begin{subfigure}[b]{0.32\linewidth}
        \centering
        \includegraphics[width=\linewidth]{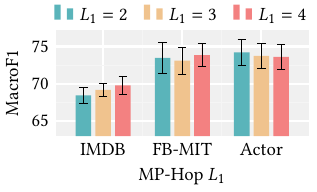}
    \end{subfigure}
    % \hfill
    \begin{subfigure}[b]{0.32\linewidth}
        \centering
        \includegraphics[width=\linewidth]{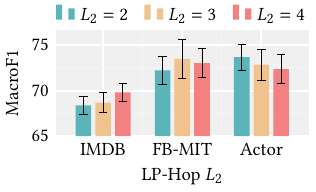}
    \end{subfigure}
    % \hfill
    \begin{subfigure}[b]{0.34\linewidth}
        \centering
        \includegraphics[width=\linewidth]{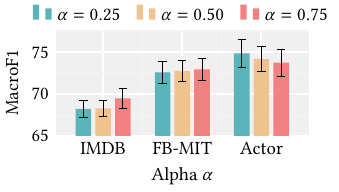}
    \end{subfigure}
    % \vspace{-2mm}
    \caption{Parameter analysis on IMDB, FB-MIT and Actor.}
    \label{fig:parameter}
    % \vspace{-2mm}
\end{figure*}
We conduct a parameter analysis on the maximum number of hops for heterogeneous message propagation ($L_1$), label propagation ($L_2$), and the parameter $\alpha$ in Adaptive Heterogeneous Convolution, as shown in \autoref{fig:parameter}. The results reveal that: (\romannumeral 1) optimal choices of hyper-parameters vary across different datasets due to their inherent characteristics. For instance, the Actor dataset favors shorter meta-paths, while the IMDB dataset benefits from longer ones. Additionally, the best value of $\alpha$ differs between IMDB and Actor, indicating varying levels of heterophily at different hop levels between these datasets. (\romannumeral 2) The AHGNN generally exhibits robustness to variations in hyper-parameter settings. Despite the presence of dataset-specific optimal settings, sub-optimal configurations of hyper-parameters can yield satisfying results.

\subsection{Visualization}

% \input{vault/gamma.tex}
% \input{vault/linear.tex}
% \input{vault/simple.tex}

% \begin{figure*}[htbp]
%     \centering
%     \begin{subfigure}[b]{0.316\linewidth}
%         \centering
%         \includegraphics[width=\linewidth]{figures/main-figure3.pdf}
%     \end{subfigure}
%     % \hfill
%     \begin{subfigure}[b]{0.335\linewidth}
%         \centering
%         \includegraphics[width=\linewidth]{figures/main-figure5.pdf}
%     \end{subfigure}
%     % \hfill
%     \begin{subfigure}[b]{0.337\linewidth}
%         \centering
%         \includegraphics[width=\linewidth]{figures/main-figure4.pdf}
%     \end{subfigure}
%     % \vspace{-4mm}
%     \caption{ 
%     Visualization of $\{\gamma_k\}$ in Adaptive Heterogeneous Convolution, Simple Attention Gates, and Linear Attention Gates during the training on Actor. They are displayed in mean and 95\% confidence intervals. S denotes \textit{Starring}, D for \textit{Director}, and W for \textit{Writer}. G-$k$ denotes the $k$-th attention gate.}
%     \label{fig:component-visualization}
% \end{figure*}

\begin{figure}[htbp]
    \centering
    \includegraphics[width=0.49\linewidth]{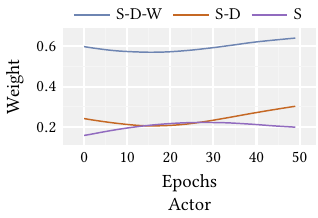}
     \includegraphics[width=0.49\linewidth]{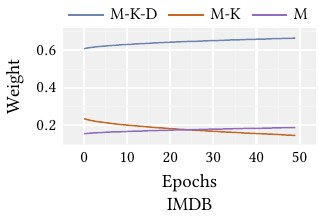}
    \caption{ 
    Visualization of $\{\gamma_l\}$ in Adaptive Heterogeneous Convolution}
    \label{fig:component-visualization}
\end{figure}

\begin{figure}[htbp]
    \centering
    \includegraphics[width=\linewidth]{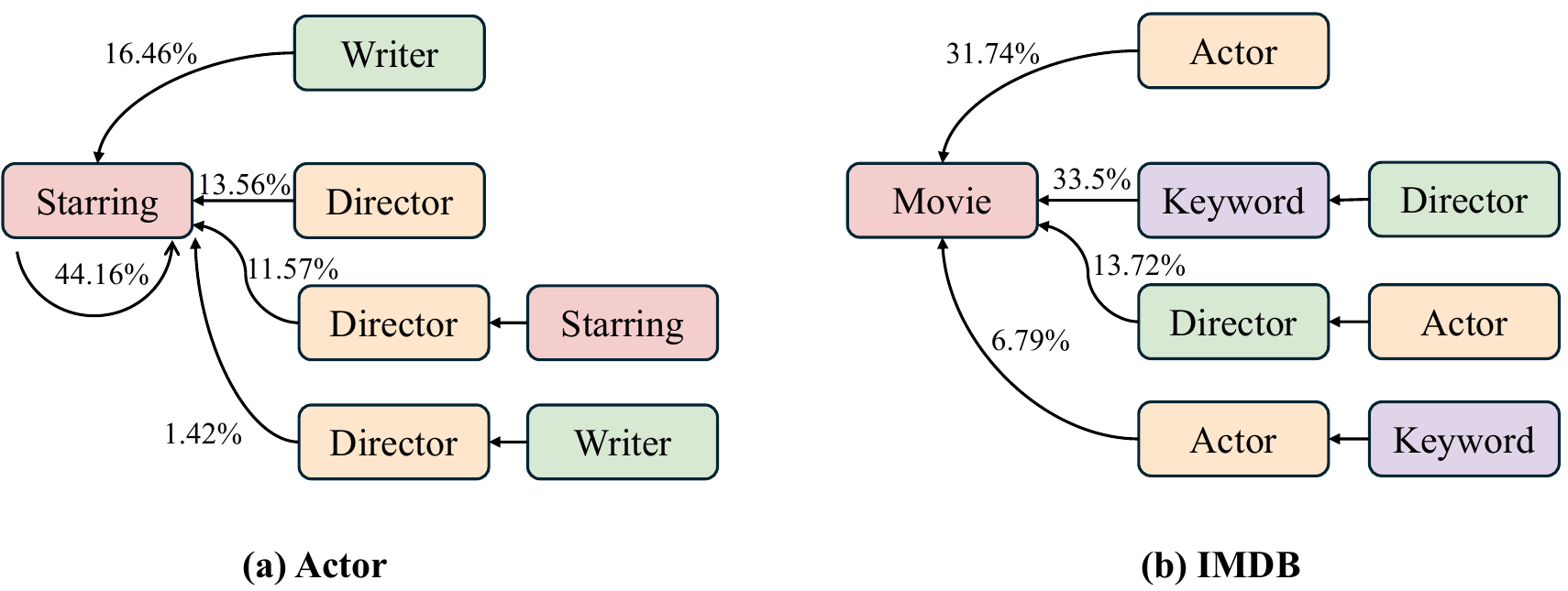}
    \caption{Visualization of meta-paths for representing the target nodes in Actor and IMDB.}
    \label{fig:mp-weights}
\end{figure}

In this subsection, we first present a visualization of the learnable parameters ${\gamma_k}$ associated with each meta-path in the Adaptive Heterogeneous Convolution module (see \autoref{fig:component-visualization}). In \autoref{fig:mp-weights}, we observe that:
(i) The parameters $\{\gamma_l\}$ play their role in modulating the importance of different hops. By adaptively adjusting hop weights during training, the model learns more informative and representative embeddings for each meta-path.
(ii) The evolution of $\{\gamma_l\}$ varies across different hops and graphs, highlighting the need for adaptive modulation. This observation supports our argument in \autoref{sec:intro} that a one-size-fits-all approach is suboptimal.

Next, we visualize the impact of individual meta-paths by examining the scaling factor $\boldsymbol{\beta}$ in the Coarse-to-Fine Semantic Fusion module. The results show that meta-paths with distinct semantic contributions are assigned varying levels of influence. For instance, highly informative meta-paths such as \textit{Starring-Starring} receive greater emphasis, whereas less relevant ones, such as \textit{Starring-Director-Writer}, are assigned lower importance.
\section{Related Works}
\label{sec:related_works}

\subsection{Heterogeneous Graph Neural Networks}

\paragraph{Meta-path-based HGNNs} Meta-path-based approaches utilize predefined or learned meta-paths for feature propagation and semantic fusion. For instance, HAN \cite{wang2019heterogeneous} and MAGNN \cite{fu2020magnn} incorporate graph attention mechanisms, while SeHGNN \cite{yang2023simple} pre-computes message passing prior to training. Seq-HGNN \cite{du2023seq} leverages sequential node embeddings. Additionally, LMSPS \cite{li2023long} introduces a shrinkable sampling strategy for efficient meta-path searching. Alongside model design, there have been efforts to address heterophily through objective function modifications. Hetero$^2$Net \cite{li2023hetero} proposes a masked meta-path strategy and label prediction tasks, while HGMS \cite{wang2025homophily} and LatGRL \cite{shen2025heterophily} adopt contrastive learning objectives. H$^2$SGNN takes a spectral perspective \cite{lu2024addressing}, while H$^2$Gformer involves graph transformers \cite{lin2024heterophily} to handle heterophily. However, they fail to handle diverse heterophily distributions across varying hops and meta-paths.

\paragraph{Meta-path-free HGNNs} Meta-path-free approaches aggregate neighbor messages similar to homogeneous GNNs, while incorporating additional features such as attention mechanisms or positional encodings for enhanced semantic representation \cite{zhu2019relation, hong2020attention}. HetGNN \cite{zhang2019heterogeneous} uses random walks to sample semantically consistent neighbors, while SHGN \cite{lv2021we} combines node features with learnable edge embeddings for heterogeneous attention. HGT \cite{hu2020heterogeneous} and HINormer \cite{mao2023hinormer} further introduce Transformer-style heterogeneous mutual attention. Additionally, LSPI \cite{zhao2025lspi} proposes dividing meta-paths into large and small neighbor paths for improved representation.

\subsection{Heterophily Graph Learning}
Traditional GNNs assume homophily (similarity between connected nodes) \cite{mcpherson2001birds,luan2024heterophilic} and are less effective on heterophily graphs where connected nodes differ significantly. Recent works have extended GNNs to heterophily graphs, focusing primarily on homogeneous graphs. Models like H2GCN \cite{zhu2020beyond} and GPR-GNN \cite{chien2020adaptive} enhance message passing with high-order re-weighting techniques for better heterophily handling. Other approaches, such as LINKX \cite{lim2021large}, GloGNN \cite{li2022finding}, MWGNN \cite{ma2022meta}, ACMGCN \cite{luan2022revisiting}, ALT-GNN \cite{xu2023node}, and more \cite{ma2023rethinking}, refine graph convolution for heterophily, excelling on homogeneous data but struggling with heterogeneous graphs due to lack of type-awareness. HDHGR \cite{guo2023homophily} adapts GNNs to heterophily HGs via graph rewiring but underperforms compared to other HGNNs, as shown in \autoref{table:main-node-classification}. The key difference lies in handling heterophily: HDHGR rewires the graph, while AHGNN directly models varying heterophily and semantic spaces.

\section{Conclusion}
In this paper, we aim to improve the performance of Heterogeneous Graph Neural Networks (HGNNs) on heterophilous data. We identify two key challenges:  
(\romannumeral 1) the variation of heterophily distributions across hops and meta-paths, and  
(\romannumeral 2) the complex, heterophily-influenced semantic variation among different meta-paths.
To address these issues, we propose the \textbf{Adaptive Heterogeneous Graph Neural Network (AHGNN)}. AHGNN conducts hop-specific and meta-path-specific graph convolution in a heterophily-aware fashion. It further refines node representations through a \textbf{Coarse-to-Fine Semantic Fusion} mechanism, which amplifies informative semantics while suppressing noisy signals.
Extensive experiments on seven real-world datasets demonstrate that AHGNN achieves superior performance and efficiency, particularly on graphs characterized by strong heterophily.

%%
%% The next two lines define the bibliography style to be used, and
\begin{acks}
This work was supported by the National Natural Science Foundation of China (Grant No. 62276006)
\end{acks}

\section{Declaration on Generative AI Tools}

The generative AI tools employed in this work are solely used for language polishing and refinement. These tools do not contribute to the development of the core ideas, methodologies, or results presented in the work. 

%% the bibliography file.
\bibliographystyle{ACM-Reference-Format}
\bibliography{acmart}

%%% -*-BibTeX-*-
%%% Do NOT edit. File created by BibTeX with style
%%% ACM-Reference-Format-Journals [18-Jan-2012].

\begin{thebibliography}{41}

%%% ====================================================================
%%% NOTE TO THE USER: you can override these defaults by providing
%%% customized versions of any of these macros before the \bibliography
%%% command.  Each of them MUST provide its own final punctuation,
%%% except for \shownote{} and \showURL{}.  The latter two
%%% do not use final punctuation, in order to avoid confusing it with
%%% the Web address.
%%%
%%% To suppress output of a particular field, define its macro to expand
%%% to an empty string, or better, \unskip, like this:
%%%
%%% \newcommand{\showURL}[1]{\unskip}   % LaTeX syntax
%%%
%%% \def \showURL #1{\unskip}           % plain TeX syntax
%%%
%%% ====================================================================

\ifx \showCODEN    \undefined \def \showCODEN     #1{\unskip}     \fi
\ifx \showISBNx    \undefined \def \showISBNx     #1{\unskip}     \fi
\ifx \showISBNxiii \undefined \def \showISBNxiii  #1{\unskip}     \fi
\ifx \showISSN     \undefined \def \showISSN      #1{\unskip}     \fi
\ifx \showLCCN     \undefined \def \showLCCN      #1{\unskip}     \fi
\ifx \shownote     \undefined \def \shownote      #1{#1}          \fi
\ifx \showarticletitle \undefined \def \showarticletitle #1{#1}   \fi
\ifx \showURL      \undefined \def \showURL       {\relax}        \fi
% The following commands are used for tagged output and should be
% invisible to TeX
\providecommand\bibfield[2]{#2}
\providecommand\bibinfo[2]{#2}
\providecommand\natexlab[1]{#1}
\providecommand\showeprint[2][]{arXiv:#2}

\bibitem[Bandyopadhyay et~al\mbox{.}(2005)]%
        {bandyopadhyay2005link}
\bibfield{author}{\bibinfo{person}{Sanghamitra Bandyopadhyay},
  \bibinfo{person}{Ujjwal Maulik}, \bibinfo{person}{Lawrence~B Holder},
  \bibinfo{person}{Diane~J Cook}, {and} \bibinfo{person}{Lise Getoor}.}
  \bibinfo{year}{2005}\natexlab{}.
\newblock \showarticletitle{Link-based classification}.
\newblock \bibinfo{journal}{\emph{Advanced methods for knowledge discovery from
  complex data}} (\bibinfo{year}{2005}), \bibinfo{pages}{189--207}.
\newblock


\bibitem[Chen et~al\mbox{.}(2025)]%
        {chen2025dagprompt}
\bibfield{author}{\bibinfo{person}{Qin Chen}, \bibinfo{person}{Liang Wang},
  \bibinfo{person}{Bo Zheng}, {and} \bibinfo{person}{Guojie Song}.}
  \bibinfo{year}{2025}\natexlab{}.
\newblock \showarticletitle{Dagprompt: Pushing the limits of graph prompting
  with a distribution-aware graph prompt tuning approach}. In
  \bibinfo{booktitle}{\emph{Proceedings of the ACM on Web Conference 2025}}.
  \bibinfo{pages}{4346--4358}.
\newblock


\bibitem[Chien et~al\mbox{.}(2021)]%
        {chien2020adaptive}
\bibfield{author}{\bibinfo{person}{Eli Chien}, \bibinfo{person}{Jianhao Peng},
  \bibinfo{person}{Pan Li}, {and} \bibinfo{person}{Olgica Milenkovic}.}
  \bibinfo{year}{2021}\natexlab{}.
\newblock \showarticletitle{Adaptive universal generalized pagerank graph
  neural network}.
\newblock \bibinfo{journal}{\emph{9th International Conference on Learning
  Representations, {ICLR} 2021, Virtual Event, Austria, May 3-7, 2021}}
  (\bibinfo{year}{2021}).
\newblock


\bibitem[Du et~al\mbox{.}(2023)]%
        {du2023seq}
\bibfield{author}{\bibinfo{person}{Chenguang Du}, \bibinfo{person}{Kaichun
  Yao}, \bibinfo{person}{Hengshu Zhu}, \bibinfo{person}{Deqing Wang},
  \bibinfo{person}{Fuzhen Zhuang}, {and} \bibinfo{person}{Hui Xiong}.}
  \bibinfo{year}{2023}\natexlab{}.
\newblock \showarticletitle{Seq-HGNN: Learning Sequential Node Representation
  on Heterogeneous Graph}.
\newblock \bibinfo{journal}{\emph{Proceedings of the 46th International {ACM}
  {SIGIR} Conference on Research and Development in Information Retrieval,
  {SIGIR} 2023, Taipei, Taiwan, July 23-27, 2023}} (\bibinfo{year}{2023}).
\newblock


\bibitem[Fu et~al\mbox{.}(2020)]%
        {fu2020magnn}
\bibfield{author}{\bibinfo{person}{Xinyu Fu}, \bibinfo{person}{Jiani Zhang},
  \bibinfo{person}{Ziqiao Meng}, {and} \bibinfo{person}{Irwin King}.}
  \bibinfo{year}{2020}\natexlab{}.
\newblock \showarticletitle{Magnn: Metapath aggregated graph neural network for
  heterogeneous graph embedding}. In \bibinfo{booktitle}{\emph{Proceedings of
  The Web Conference 2020}}. \bibinfo{pages}{2331--2341}.
\newblock


\bibitem[Gasteiger et~al\mbox{.}(2018)]%
        {gasteiger2018predict}
\bibfield{author}{\bibinfo{person}{Johannes Gasteiger},
  \bibinfo{person}{Aleksandar Bojchevski}, {and} \bibinfo{person}{Stephan
  G{\"u}nnemann}.} \bibinfo{year}{2018}\natexlab{}.
\newblock \showarticletitle{Predict then propagate: Graph neural networks meet
  personalized pagerank}.
\newblock \bibinfo{journal}{\emph{arXiv preprint arXiv:1810.05997}}
  (\bibinfo{year}{2018}).
\newblock


\bibitem[Gong et~al\mbox{.}(2024)]%
        {gong2024towards}
\bibfield{author}{\bibinfo{person}{Chenghua Gong}, \bibinfo{person}{Yao Cheng},
  \bibinfo{person}{Xiang Li}, \bibinfo{person}{Caihua Shan},
  \bibinfo{person}{Siqiang Luo}, {and} \bibinfo{person}{Chuan Shi}.}
  \bibinfo{year}{2024}\natexlab{}.
\newblock \showarticletitle{Towards learning from graphs with heterophily:
  Progress and future}.
\newblock \bibinfo{journal}{\emph{arXiv preprint arXiv:2401.09769}}
  (\bibinfo{year}{2024}).
\newblock


\bibitem[Guo et~al\mbox{.}(2023)]%
        {guo2023homophily}
\bibfield{author}{\bibinfo{person}{Jiayan Guo}, \bibinfo{person}{Lun Du},
  \bibinfo{person}{Wendong Bi}, \bibinfo{person}{Qiang Fu},
  \bibinfo{person}{Xiaojun Ma}, \bibinfo{person}{Xu Chen}, \bibinfo{person}{Shi
  Han}, \bibinfo{person}{Dongmei Zhang}, {and} \bibinfo{person}{Yan Zhang}.}
  \bibinfo{year}{2023}\natexlab{}.
\newblock \showarticletitle{Homophily-oriented Heterogeneous Graph Rewiring}.
  In \bibinfo{booktitle}{\emph{Proceedings of the ACM Web Conference 2023}}.
  \bibinfo{pages}{511--522}.
\newblock


\bibitem[Hong et~al\mbox{.}(2020)]%
        {hong2020attention}
\bibfield{author}{\bibinfo{person}{Huiting Hong}, \bibinfo{person}{Hantao Guo},
  \bibinfo{person}{Yucheng Lin}, \bibinfo{person}{Xiaoqing Yang},
  \bibinfo{person}{Zang Li}, {and} \bibinfo{person}{Jieping Ye}.}
  \bibinfo{year}{2020}\natexlab{}.
\newblock \showarticletitle{An attention-based graph neural network for
  heterogeneous structural learning}. In \bibinfo{booktitle}{\emph{Proceedings
  of the AAAI conference on artificial intelligence}},
  Vol.~\bibinfo{volume}{34}. \bibinfo{pages}{4132--4139}.
\newblock


\bibitem[Hu et~al\mbox{.}(2021)]%
        {hu2021ogb}
\bibfield{author}{\bibinfo{person}{Weihua Hu}, \bibinfo{person}{Matthias Fey},
  \bibinfo{person}{Hongyu Ren}, \bibinfo{person}{Maho Nakata},
  \bibinfo{person}{Yuxiao Dong}, {and} \bibinfo{person}{Jure Leskovec}.}
  \bibinfo{year}{2021}\natexlab{}.
\newblock \showarticletitle{Ogb-lsc: A large-scale challenge for machine
  learning on graphs}.
\newblock \bibinfo{journal}{\emph{arXiv preprint arXiv:2103.09430}}
  (\bibinfo{year}{2021}).
\newblock


\bibitem[Hu et~al\mbox{.}(2020)]%
        {hu2020heterogeneous}
\bibfield{author}{\bibinfo{person}{Ziniu Hu}, \bibinfo{person}{Yuxiao Dong},
  \bibinfo{person}{Kuansan Wang}, {and} \bibinfo{person}{Yizhou Sun}.}
  \bibinfo{year}{2020}\natexlab{}.
\newblock \showarticletitle{Heterogeneous graph transformer}. In
  \bibinfo{booktitle}{\emph{Proceedings of the web conference 2020}}.
  \bibinfo{pages}{2704--2710}.
\newblock


\bibitem[Kingma and Ba(2014)]%
        {kingma2014adam}
\bibfield{author}{\bibinfo{person}{Diederik~P Kingma} {and}
  \bibinfo{person}{Jimmy Ba}.} \bibinfo{year}{2014}\natexlab{}.
\newblock \showarticletitle{Adam: A method for stochastic optimization}.
\newblock \bibinfo{journal}{\emph{3rd International Conference on Learning
  Representations, {ICLR} 2015, San Diego, CA, USA, May 7-9, 2015, Conference
  Track Proceedings}} (\bibinfo{year}{2014}).
\newblock


\bibitem[Kipf and Welling(2016)]%
        {kipf2016semi}
\bibfield{author}{\bibinfo{person}{Thomas~N Kipf} {and} \bibinfo{person}{Max
  Welling}.} \bibinfo{year}{2016}\natexlab{}.
\newblock \showarticletitle{Semi-supervised classification with graph
  convolutional networks}.
\newblock \bibinfo{journal}{\emph{arXiv preprint arXiv:1609.02907}}
  (\bibinfo{year}{2016}).
\newblock


\bibitem[Li et~al\mbox{.}(2023a)]%
        {li2023long}
\bibfield{author}{\bibinfo{person}{Chao Li}, \bibinfo{person}{Zijie Guo},
  \bibinfo{person}{Qiuting He}, \bibinfo{person}{Hao Xu}, {and}
  \bibinfo{person}{Kun He}.} \bibinfo{year}{2023}\natexlab{a}.
\newblock \showarticletitle{Long-range Dependency based Multi-Layer Perceptron
  for Heterogeneous Information Networks}.
\newblock \bibinfo{journal}{\emph{arXiv preprint arXiv:2307.08430}}
  (\bibinfo{year}{2023}).
\newblock


\bibitem[Li et~al\mbox{.}(2023b)]%
        {li2023hetero}
\bibfield{author}{\bibinfo{person}{Jintang Li}, \bibinfo{person}{Zheng Wei},
  \bibinfo{person}{Jiawang Dan}, \bibinfo{person}{Jing Zhou},
  \bibinfo{person}{Yuchang Zhu}, \bibinfo{person}{Ruofan Wu},
  \bibinfo{person}{Baokun Wang}, \bibinfo{person}{Zhang Zhen},
  \bibinfo{person}{Changhua Meng}, \bibinfo{person}{Hong Jin}, {et~al\mbox{.}}}
  \bibinfo{year}{2023}\natexlab{b}.
\newblock \showarticletitle{Hetero$^{2}$ Net: Heterophily-aware Representation
  Learning on Heterogenerous Graphs}.
\newblock \bibinfo{journal}{\emph{arXiv preprint arXiv:2310.11664}}
  (\bibinfo{year}{2023}).
\newblock


\bibitem[Li et~al\mbox{.}(2022)]%
        {li2022finding}
\bibfield{author}{\bibinfo{person}{Xiang Li}, \bibinfo{person}{Renyu Zhu},
  \bibinfo{person}{Yao Cheng}, \bibinfo{person}{Caihua Shan},
  \bibinfo{person}{Siqiang Luo}, \bibinfo{person}{Dongsheng Li}, {and}
  \bibinfo{person}{Weining Qian}.} \bibinfo{year}{2022}\natexlab{}.
\newblock \showarticletitle{Finding global homophily in graph neural networks
  when meeting heterophily}. In \bibinfo{booktitle}{\emph{International
  Conference on Machine Learning}}. PMLR, \bibinfo{pages}{13242--13256}.
\newblock


\bibitem[Lim et~al\mbox{.}(2021)]%
        {lim2021large}
\bibfield{author}{\bibinfo{person}{Derek Lim}, \bibinfo{person}{Felix Hohne},
  \bibinfo{person}{Xiuyu Li}, \bibinfo{person}{Sijia~Linda Huang},
  \bibinfo{person}{Vaishnavi Gupta}, \bibinfo{person}{Omkar Bhalerao}, {and}
  \bibinfo{person}{Ser~Nam Lim}.} \bibinfo{year}{2021}\natexlab{}.
\newblock \showarticletitle{Large scale learning on non-homophilous graphs: New
  benchmarks and strong simple methods}.
\newblock \bibinfo{journal}{\emph{Advances in Neural Information Processing
  Systems}}  \bibinfo{volume}{34} (\bibinfo{year}{2021}),
  \bibinfo{pages}{20887--20902}.
\newblock


\bibitem[Lin et~al\mbox{.}(2024)]%
        {lin2024heterophily}
\bibfield{author}{\bibinfo{person}{Junhong Lin}, \bibinfo{person}{Xiaojie Guo},
  \bibinfo{person}{Shuaicheng Zhang}, \bibinfo{person}{Dawei Zhou},
  \bibinfo{person}{Yada Zhu}, {and} \bibinfo{person}{Julian Shun}.}
  \bibinfo{year}{2024}\natexlab{}.
\newblock \showarticletitle{When Heterophily Meets Heterogeneity: New Graph
  Benchmarks and Effective Methods}.
\newblock \bibinfo{journal}{\emph{arXiv preprint arXiv:2407.10916}}
  (\bibinfo{year}{2024}).
\newblock


\bibitem[Lu et~al\mbox{.}(2024)]%
        {lu2024addressing}
\bibfield{author}{\bibinfo{person}{Kangkang Lu}, \bibinfo{person}{Yanhua Yu},
  \bibinfo{person}{Zhiyong Huang}, \bibinfo{person}{Jia Li},
  \bibinfo{person}{Yuling Wang}, \bibinfo{person}{Meiyu Liang},
  \bibinfo{person}{Xiting Qin}, \bibinfo{person}{Yimeng Ren},
  \bibinfo{person}{Tat-Seng Chua}, {and} \bibinfo{person}{Xidian Wang}.}
  \bibinfo{year}{2024}\natexlab{}.
\newblock \showarticletitle{Addressing Heterogeneity and Heterophily in Graphs:
  A Heterogeneous Heterophilic Spectral Graph Neural Network}.
\newblock \bibinfo{journal}{\emph{arXiv preprint arXiv:2410.13373}}
  (\bibinfo{year}{2024}).
\newblock


\bibitem[Luan et~al\mbox{.}(2024)]%
        {luan2024heterophilic}
\bibfield{author}{\bibinfo{person}{Sitao Luan}, \bibinfo{person}{Chenqing Hua},
  \bibinfo{person}{Qincheng Lu}, \bibinfo{person}{Liheng Ma},
  \bibinfo{person}{Lirong Wu}, \bibinfo{person}{Xinyu Wang},
  \bibinfo{person}{Minkai Xu}, \bibinfo{person}{Xiao-Wen Chang},
  \bibinfo{person}{Doina Precup}, \bibinfo{person}{Rex Ying}, {et~al\mbox{.}}}
  \bibinfo{year}{2024}\natexlab{}.
\newblock \showarticletitle{The heterophilic graph learning handbook:
  Benchmarks, models, theoretical analysis, applications and challenges}.
\newblock \bibinfo{journal}{\emph{arXiv preprint arXiv:2407.09618}}
  (\bibinfo{year}{2024}).
\newblock


\bibitem[Luan et~al\mbox{.}(2022)]%
        {luan2022revisiting}
\bibfield{author}{\bibinfo{person}{Sitao Luan}, \bibinfo{person}{Chenqing Hua},
  \bibinfo{person}{Qincheng Lu}, \bibinfo{person}{Jiaqi Zhu},
  \bibinfo{person}{Mingde Zhao}, \bibinfo{person}{Shuyuan Zhang},
  \bibinfo{person}{Xiao-Wen Chang}, {and} \bibinfo{person}{Doina Precup}.}
  \bibinfo{year}{2022}\natexlab{}.
\newblock \showarticletitle{Revisiting heterophily for graph neural networks}.
\newblock \bibinfo{journal}{\emph{Advances in neural information processing
  systems}}  \bibinfo{volume}{35} (\bibinfo{year}{2022}),
  \bibinfo{pages}{1362--1375}.
\newblock


\bibitem[Lv et~al\mbox{.}(2021)]%
        {lv2021we}
\bibfield{author}{\bibinfo{person}{Qingsong Lv}, \bibinfo{person}{Ming Ding},
  \bibinfo{person}{Qiang Liu}, \bibinfo{person}{Yuxiang Chen},
  \bibinfo{person}{Wenzheng Feng}, \bibinfo{person}{Siming He},
  \bibinfo{person}{Chang Zhou}, \bibinfo{person}{Jianguo Jiang},
  \bibinfo{person}{Yuxiao Dong}, {and} \bibinfo{person}{Jie Tang}.}
  \bibinfo{year}{2021}\natexlab{}.
\newblock \showarticletitle{Are we really making much progress? revisiting,
  benchmarking and refining heterogeneous graph neural networks}. In
  \bibinfo{booktitle}{\emph{Proceedings of the 27th ACM SIGKDD conference on
  knowledge discovery \& data mining}}. \bibinfo{pages}{1150--1160}.
\newblock


\bibitem[Ma et~al\mbox{.}(2022)]%
        {ma2022meta}
\bibfield{author}{\bibinfo{person}{Xiaojun Ma}, \bibinfo{person}{Qin Chen},
  \bibinfo{person}{Yuanyi Ren}, \bibinfo{person}{Guojie Song}, {and}
  \bibinfo{person}{Liang Wang}.} \bibinfo{year}{2022}\natexlab{}.
\newblock \showarticletitle{Meta-weight graph neural network: Push the limits
  beyond global homophily}. In \bibinfo{booktitle}{\emph{Proceedings of the ACM
  Web Conference 2022}}. \bibinfo{pages}{1270--1280}.
\newblock


\bibitem[Ma et~al\mbox{.}(2023)]%
        {ma2023rethinking}
\bibfield{author}{\bibinfo{person}{Xiaojun Ma}, \bibinfo{person}{Qin Chen},
  \bibinfo{person}{Yi Wu}, \bibinfo{person}{Guojie Song},
  \bibinfo{person}{Liang Wang}, {and} \bibinfo{person}{Bo Zheng}.}
  \bibinfo{year}{2023}\natexlab{}.
\newblock \showarticletitle{Rethinking structural encodings: Adaptive graph
  transformer for node classification task}. In
  \bibinfo{booktitle}{\emph{Proceedings of the ACM web conference 2023}}.
  \bibinfo{pages}{533--544}.
\newblock


\bibitem[Mao et~al\mbox{.}(2023)]%
        {mao2023hinormer}
\bibfield{author}{\bibinfo{person}{Qiheng Mao}, \bibinfo{person}{Zemin Liu},
  \bibinfo{person}{Chenghao Liu}, {and} \bibinfo{person}{Jianling Sun}.}
  \bibinfo{year}{2023}\natexlab{}.
\newblock \showarticletitle{Hinormer: Representation learning on heterogeneous
  information networks with graph transformer}. In
  \bibinfo{booktitle}{\emph{Proceedings of the ACM Web Conference 2023}}.
  \bibinfo{pages}{599--610}.
\newblock


\bibitem[McPherson et~al\mbox{.}(2001)]%
        {mcpherson2001birds}
\bibfield{author}{\bibinfo{person}{Miller McPherson}, \bibinfo{person}{Lynn
  Smith-Lovin}, {and} \bibinfo{person}{James~M Cook}.}
  \bibinfo{year}{2001}\natexlab{}.
\newblock \showarticletitle{Birds of a feather: Homophily in social networks}.
\newblock \bibinfo{journal}{\emph{Annual review of sociology}}
  \bibinfo{volume}{27}, \bibinfo{number}{1} (\bibinfo{year}{2001}),
  \bibinfo{pages}{415--444}.
\newblock


\bibitem[Pillai et~al\mbox{.}(2005)]%
        {pillai2005perron}
\bibfield{author}{\bibinfo{person}{S~Unnikrishna Pillai},
  \bibinfo{person}{Torsten Suel}, {and} \bibinfo{person}{Seunghun Cha}.}
  \bibinfo{year}{2005}\natexlab{}.
\newblock \showarticletitle{The Perron-Frobenius theorem: some of its
  applications}.
\newblock \bibinfo{journal}{\emph{IEEE Signal Processing Magazine}}
  \bibinfo{volume}{22}, \bibinfo{number}{2} (\bibinfo{year}{2005}),
  \bibinfo{pages}{62--75}.
\newblock


\bibitem[Shen and Kang(2025)]%
        {shen2025heterophily}
\bibfield{author}{\bibinfo{person}{Zhixiang Shen} {and} \bibinfo{person}{Zhao
  Kang}.} \bibinfo{year}{2025}\natexlab{}.
\newblock \showarticletitle{When heterophily meets heterogeneous graphs: Latent
  graphs guided unsupervised representation learning}.
\newblock \bibinfo{journal}{\emph{IEEE Transactions on Neural Networks and
  Learning Systems}} (\bibinfo{year}{2025}).
\newblock


\bibitem[Sun and Han(2012)]%
        {sun2012mining}
\bibfield{author}{\bibinfo{person}{Yizhou Sun} {and} \bibinfo{person}{Jiawei
  Han}.} \bibinfo{year}{2012}\natexlab{}.
\newblock \bibinfo{booktitle}{\emph{Mining heterogeneous information networks:
  principles and methodologies}}.
\newblock \bibinfo{publisher}{Morgan \& Claypool Publishers}.
\newblock


\bibitem[Tang et~al\mbox{.}(2009)]%
        {tang2009social}
\bibfield{author}{\bibinfo{person}{Jie Tang}, \bibinfo{person}{Jimeng Sun},
  \bibinfo{person}{Chi Wang}, {and} \bibinfo{person}{Zi Yang}.}
  \bibinfo{year}{2009}\natexlab{}.
\newblock \showarticletitle{Social influence analysis in large-scale networks}.
  In \bibinfo{booktitle}{\emph{Proceedings of the 15th ACM SIGKDD international
  conference on Knowledge discovery and data mining}}.
  \bibinfo{pages}{807--816}.
\newblock


\bibitem[Traud et~al\mbox{.}(2012)]%
        {traud2012social}
\bibfield{author}{\bibinfo{person}{Amanda~L Traud}, \bibinfo{person}{Peter~J
  Mucha}, {and} \bibinfo{person}{Mason~A Porter}.}
  \bibinfo{year}{2012}\natexlab{}.
\newblock \showarticletitle{Social structure of facebook networks}.
\newblock \bibinfo{journal}{\emph{Physica A: Statistical Mechanics and its
  Applications}} \bibinfo{volume}{391}, \bibinfo{number}{16}
  (\bibinfo{year}{2012}), \bibinfo{pages}{4165--4180}.
\newblock


\bibitem[Veli{\v{c}}kovi{\'c} et~al\mbox{.}(2017)]%
        {velivckovic2017graph}
\bibfield{author}{\bibinfo{person}{Petar Veli{\v{c}}kovi{\'c}},
  \bibinfo{person}{Guillem Cucurull}, \bibinfo{person}{Arantxa Casanova},
  \bibinfo{person}{Adriana Romero}, \bibinfo{person}{Pietro Lio}, {and}
  \bibinfo{person}{Yoshua Bengio}.} \bibinfo{year}{2017}\natexlab{}.
\newblock \showarticletitle{Graph attention networks}.
\newblock \bibinfo{journal}{\emph{arXiv preprint arXiv:1710.10903}}
  (\bibinfo{year}{2017}).
\newblock


\bibitem[Wang et~al\mbox{.}(2025)]%
        {wang2025homophily}
\bibfield{author}{\bibinfo{person}{Haosen Wang}, \bibinfo{person}{Chenglong
  Shi}, \bibinfo{person}{Can Xu}, \bibinfo{person}{Surong Yan}, {and}
  \bibinfo{person}{Pan Tang}.} \bibinfo{year}{2025}\natexlab{}.
\newblock \showarticletitle{Homophily-aware Heterogeneous Graph Contrastive
  Learning}.
\newblock \bibinfo{journal}{\emph{arXiv preprint arXiv:2501.08538}}
  (\bibinfo{year}{2025}).
\newblock


\bibitem[Wang et~al\mbox{.}(2019)]%
        {wang2019heterogeneous}
\bibfield{author}{\bibinfo{person}{Xiao Wang}, \bibinfo{person}{Houye Ji},
  \bibinfo{person}{Chuan Shi}, \bibinfo{person}{Bai Wang},
  \bibinfo{person}{Yanfang Ye}, \bibinfo{person}{Peng Cui}, {and}
  \bibinfo{person}{Philip~S Yu}.} \bibinfo{year}{2019}\natexlab{}.
\newblock \showarticletitle{Heterogeneous graph attention network}. In
  \bibinfo{booktitle}{\emph{The world wide web conference}}.
  \bibinfo{pages}{2022--2032}.
\newblock


\bibitem[Wu et~al\mbox{.}(2019)]%
        {wu2019simplifying}
\bibfield{author}{\bibinfo{person}{Felix Wu}, \bibinfo{person}{Amauri Souza},
  \bibinfo{person}{Tianyi Zhang}, \bibinfo{person}{Christopher Fifty},
  \bibinfo{person}{Tao Yu}, {and} \bibinfo{person}{Kilian Weinberger}.}
  \bibinfo{year}{2019}\natexlab{}.
\newblock \showarticletitle{Simplifying graph convolutional networks}. In
  \bibinfo{booktitle}{\emph{International conference on machine learning}}.
  PMLR, \bibinfo{pages}{6861--6871}.
\newblock


\bibitem[Xu et~al\mbox{.}(2023)]%
        {xu2023node}
\bibfield{author}{\bibinfo{person}{Zhe Xu}, \bibinfo{person}{Yuzhong Chen},
  \bibinfo{person}{Qinghai Zhou}, \bibinfo{person}{Yuhang Wu},
  \bibinfo{person}{Menghai Pan}, \bibinfo{person}{Hao Yang}, {and}
  \bibinfo{person}{Hanghang Tong}.} \bibinfo{year}{2023}\natexlab{}.
\newblock \showarticletitle{Node classification beyond homophily: Towards a
  general solution}. In \bibinfo{booktitle}{\emph{Proceedings of the 29th ACM
  SIGKDD Conference on Knowledge Discovery and Data Mining}}.
  \bibinfo{pages}{2862--2873}.
\newblock


\bibitem[Yang et~al\mbox{.}(2023)]%
        {yang2023simple}
\bibfield{author}{\bibinfo{person}{Xiaocheng Yang}, \bibinfo{person}{Mingyu
  Yan}, \bibinfo{person}{Shirui Pan}, \bibinfo{person}{Xiaochun Ye}, {and}
  \bibinfo{person}{Dongrui Fan}.} \bibinfo{year}{2023}\natexlab{}.
\newblock \showarticletitle{Simple and efficient heterogeneous graph neural
  network}. In \bibinfo{booktitle}{\emph{Proceedings of the AAAI Conference on
  Artificial Intelligence}}, Vol.~\bibinfo{volume}{37}.
  \bibinfo{pages}{10816--10824}.
\newblock


\bibitem[Zhang et~al\mbox{.}(2019)]%
        {zhang2019heterogeneous}
\bibfield{author}{\bibinfo{person}{Chuxu Zhang}, \bibinfo{person}{Dongjin
  Song}, \bibinfo{person}{Chao Huang}, \bibinfo{person}{Ananthram Swami}, {and}
  \bibinfo{person}{Nitesh~V Chawla}.} \bibinfo{year}{2019}\natexlab{}.
\newblock \showarticletitle{Heterogeneous graph neural network}. In
  \bibinfo{booktitle}{\emph{Proceedings of the 25th ACM SIGKDD international
  conference on knowledge discovery \& data mining}}.
  \bibinfo{pages}{793--803}.
\newblock


\bibitem[Zhao et~al\mbox{.}(2025)]%
        {zhao2025lspi}
\bibfield{author}{\bibinfo{person}{Yufei Zhao}, \bibinfo{person}{Shiduo Wang},
  {and} \bibinfo{person}{Hua Duan}.} \bibinfo{year}{2025}\natexlab{}.
\newblock \showarticletitle{LSPI: Heterogeneous graph neural network
  classification aggregation algorithm based on size neighbor path
  identification}.
\newblock \bibinfo{journal}{\emph{Applied Soft Computing}}
  (\bibinfo{year}{2025}), \bibinfo{pages}{112656}.
\newblock


\bibitem[Zhu et~al\mbox{.}(2020)]%
        {zhu2020beyond}
\bibfield{author}{\bibinfo{person}{Jiong Zhu}, \bibinfo{person}{Yujun Yan},
  \bibinfo{person}{Lingxiao Zhao}, \bibinfo{person}{Mark Heimann},
  \bibinfo{person}{Leman Akoglu}, {and} \bibinfo{person}{Danai Koutra}.}
  \bibinfo{year}{2020}\natexlab{}.
\newblock \showarticletitle{Beyond homophily in graph neural networks: Current
  limitations and effective designs}.
\newblock \bibinfo{journal}{\emph{Advances in neural information processing
  systems}}  \bibinfo{volume}{33} (\bibinfo{year}{2020}),
  \bibinfo{pages}{7793--7804}.
\newblock


\bibitem[Zhu et~al\mbox{.}(2019)]%
        {zhu2019relation}
\bibfield{author}{\bibinfo{person}{Shichao Zhu}, \bibinfo{person}{Chuan Zhou},
  \bibinfo{person}{Shirui Pan}, \bibinfo{person}{Xingquan Zhu}, {and}
  \bibinfo{person}{Bin Wang}.} \bibinfo{year}{2019}\natexlab{}.
\newblock \showarticletitle{Relation structure-aware heterogeneous graph neural
  network}. In \bibinfo{booktitle}{\emph{2019 IEEE international conference on
  data mining (ICDM)}}. IEEE, \bibinfo{pages}{1534--1539}.
\newblock


\end{thebibliography}

%%
%% If your work has an appendix, this is the place to put it.
\appendix

\end{document}